\definecolor{Blue}{rgb}{0.2,0.2,0.6}
\newtheorem{definition}{Definition}
\newtheorem{proposition}{Proposition}
\newtheorem{lemma}{Lemma}
\newcommand{\mR}{{\mathbb R}}
\newcommand{\cD}{{\mathcal D}}
\newcommand{\cO}{{\mathcal O}}
\newcommand{\cP}{{\mathcal P}}
\newcommand{\cT}{{\mathcal T}}
\newcommand{\cW}{{\mathcal W}}
\newcommand{\cX}{{\mathcal X}}
\def\OT{{\textup{OT}}}
\def\<{{\langle}}
\def\>{{\rangle}}
\def\R{{\mathbb{R}}}
\def\d{{\text{d}}}
\newcommand{\fan}[1]{{\color{black}{#1}}}
\DeclareMathOperator*{\argmin}{arg\,min}
\title{Generating Synthetic Datasets by \\ Interpolating along Generalized Geodesics}
\author[1]{\href{mailto:<jiaojiaofan@gatech.edu>?Subject=Your UAI 2023 paper}{Jiaojiao Fan\thanks{Work done partly during an internship at Microsoft Research.}}{}} 
\author[2]{David Alvarez-Melis}
\affil[1]{%
    Georgia Tech\\
    Atlanta, Georgia, USA
}
\affil[2]{%
    Microsoft Research \& Harvard University\\
    Cambridge, Massachusetts, USA
}
\begin{document}
\maketitle

\begin{abstract}
  Data for pretraining machine learning models often consists of collections of heterogeneous datasets. Although training on their union is reasonable in agnostic settings, it might be suboptimal when the target domain ---where the model will ultimately be used--- is known in advance. In that case, one would ideally pretrain only on the dataset(s) most similar to the target one. Instead of limiting this choice to those datasets already present in the pretraining collection, here we explore extending this search to all datasets that can be synthesized as `combinations' of them. We define such combinations as multi-dataset interpolations, formalized through the notion of generalized geodesics from optimal transport (OT) theory. We compute these geodesics using a recent notion of distance between labeled datasets, and derive alternative interpolation schemes based on it: using either barycentric projections or optimal transport maps, the latter computed using recent neural OT methods. These methods are scalable, efficient, and ---notably--- can be used to interpolate even between datasets with distinct and unrelated label sets. Through various experiments in transfer learning in computer vision, we demonstrate this is a promising new approach for targeted on-demand dataset synthesis. 
\end{abstract}

\section{Introduction}

Recent progress in machine learning has been characterized by the rapid adoption of large pretrained models as a fundamental building block \citep{brown2020language}. These models are typically pretrained on large amounts of general-purpose data and then adapted (e.g., \textit{fine-tuned}) to a specific task of interest. Such pretraining datasets usually draw from multiple heterogeneous data sources, e.g., arising from different domains or sources. Traditionally, all available datasets are used in their entirety during pretraining, for example by pooling them together into a single dataset (when they all share the same label sets) or by training in all of them sequentially one by one. These strategies, however, come with important disadvantages. Training on the union of multiple datasets might be prohibitive or too time-consuming, and it might even be detrimental --- indeed, there is a growing line of research showing that removing pretraining data sometimes improves transfer performance \citep{jain2022data}. On the other hand, sequential learning (i.e., consuming datasets one by one) is infamously prone to \textit{catastrophic forgetting} \citep{mccloskey1989catastrophic, kirkpatrick2017overcoming}: the information from earlier datasets gradually vanishing as the model is trained on new datasets. The pitfalls of both of these approaches suggest training instead on a \textit{subset} of the available pretraining datasets, but how to choose that subset is unclear. However, when the target dataset on which the model is to be used is known in advance, the answer is much easier: intuitively, one would train only of those relevant to the target one: e.g., those most similar to it. Indeed, recent work has shown that selecting pretraining datasets based on the distance to the target is a successful strategy \citep{alvarez2020geometric, gao2021information}. However, such methods are limited to selecting (only) among individual datasets already present in the collection.

In this work, we propose a novel approach to \textit{generate} synthetic pretraining datasets as combinations of existing ones. In particular, this method searches among all possible continuous combinations of the available datasets and thus is not limited to selecting specifically one of them. When given access to the target dataset of interest, we seek among all such combinations the one closest (in terms of a metric between datasets) to the target. By characterizing datasets as sampled from an underlying probability distribution, this problem can be understood as a generalization (from Euclidean to probability space) of the problem of finding among the convex hull of a set of reference points, that which is closest to a query point. While this problem has a simple closed-form solution in Euclidean space (via an orthogonal projection), solving it in probability space is ---as we shall see here--- significantly more challenging.

We tackle this problem from the perspective of interpolation. Formally, we model the combination of datasets as an interpolation between their distributions, formalized through the notion of geodesics in probability space endowed with the Wasserstein metric \citep{ambrosio2008gradient, villani2008optimal}. In particular, we rely on \textit{generalized geodesics} \citep{craig2016exponential, ambrosio2008gradient}: constant-speed curves connecting a pair (or more) distributions parametrized with respect to a `base' distribution, whose role is played by the target dataset in our setting. Computing such geodesics requires access to either an optimal transport coupling or a map between the base distribution and every other reference distribution. The former can be computed very efficiently with off-the-shelf OT solvers, but are limited to generating only as many samples as the problem is originally solved on. In contrast, OT maps allow for on-demand out-of-sample mapping and can be estimated using recent advances in neural OT methods \citep{fan2020scalable, korotin2022neural, makkuva2020optimal}. However, most existing OT methods assume unlabeled (feature-only) distributions, but our goal here is to interpolate between classification (i.e., labeled) datasets. Therefore, we leverage a recent generalization of OT for labeled datasets to compute couplings \citep{alvarez2020geometric} and adapt and generalize neural OT methods to the labeled setting to estimate OT maps.

In summary, the contributions of this paper are:
\begin{enumerate*}[label=(\roman*)]
  \item a novel approach to generate new synthetic classification datasets from existing ones by using geodesic interpolations, applicable even if they have disjoint label sets,
  \item two efficient methods to solve OT between labeled datasets,
  which might be of independent interest,
  \item empirical validation of the method in various transfer learning settings.
\end{enumerate*}

\section{Related work}


\paragraph{Mixup and related In-Domain Interpolation} Generating training data through convex combinations was popularized by \textit{mixup} \citep{zhang2018mixup}: a simple data augmentation technique that interpolates features and labels between pairs of points. This and other works based on it \citep{zhang2021how, chuang2021fair,yao2021meta} use mixup to improve in-domain model robustness~\citep{zhu2023interpolation} and generalization by increasing the in-distribution diversity of the training data. Although sharing some intuitive principles with mixup, our method interpolates entire datasets ---rather than individual datapoints--- with the goal of improving across-distribution diversity and therefore out-of-domain generalization.

\paragraph{Dataset synthesis in machine learning} Generating data beyond what is provided as a training dataset is a crucial component of machine learning in practice. Basic transformations such as rotations, cropping, and pixel transformations can be found in most state-of-the-art computer vision models. More recently, Generative Adversarial Nets (GAN) have been used to generate synthetic data in various contexts~\citep{bowles2018GAN, yoon2019pate-gan},
a technique that has proven particularly successful in the medical imaging domain \citep{sandfort2019data}. Since GANs are trained to replicate the dataset on which they are trained, these approaches are typically confined to generating in-distribution diversity and typically operate on features only.

\paragraph{Discrete OT, Neural OT, Gradient Flows}
Barycentric projection~\citep{ambrosio2008gradient,perrot2016mapping} is a simple and effective method to approximate an optimal transport map with discrete regularized OT.
On the other hand, there has been remarkable recent progress in methods to estimate OT maps in Euclidean space using neural networks~\citep{makkuva2020optimal,fan2021scalable,rout2022generative}, which have been successfully used for image generation~\citep{rout2022generative}, style transfer~\citep{korotin2022neural}, among other applications. However, the estimation of an optimal map between (labeled) datasets has so far received much less attention. Some conditional Monge map solvers~\citep{bunne2022supervised} 
utilize the label information in a semi-supervised manner, where they assume the label-to-label correspondence between two distributions is known. Our method differs from theirs in that we do not require a pre-specified label-to-label mapping, but instead estimate it from data. Geodesics and interpolation in general metric spaces have been studied extensively in the optimal transport and metric geometry literature \citep{mccann1997convexity, agueh2011barycenters, ambrosio2008gradient, santambrogio2015optimal, villani2008optimal, craig2016exponential}, albeit mostly in a theoretical setting. Gradient flows \citep{santambrogio2015optimal}, increasingly popular in machine learning to model existing processes
\citep{bunne2022proximal, mokrov2021large-scale, fan2022variational, hua2023dynamic} or solving optimization problems over datasets \citep{alvarez2021dataset}, provide an alternative approach for interpolation between distributions but are computationally expensive.




\section{Background}
\subsection{Distribution interpolation with OT}
Consider $\cP(\cX)$ the space of probability distributions with finite second moments over some Euclidean space $\cX$. Given $\mu,\nu\in \cP(\cX)$, the Monge formulation optimal transport problem seeks a map $T:\cX\rightarrow\cX$ that transforms $\mu$ into $\nu$ at a minimal cost. Formally, the objective of this problem is
$\min_{T: T\sharp \mu = \nu} \int_{\mR^d} \|x-T(x)\|_2^2 \d\mu(x),$
where the minimization is over all the maps that pushforward distribution $\mu$ into distribution $\nu$. While a solution to this problem might not exist, a relaxation due to Kantorovich is guaranteed to have one. This modified version yields the Wasserstein-2 distance:
$ W_2^2(\mu,\nu) = \min_{\pi \in \Pi(\mu, \nu)} \int_{\mR^d} \|x-x'\|_2^2 \d\pi(x, x'),$
where now the constraint set $\Pi(\mu,\nu)= \{ \pi \in \mathcal{P}(\mathcal{X}^2) \mid P_{0\sharp}\pi = \mu, P_{1\sharp\pi}=\nu \}$ contains all couplings with marginals $\mu$ and $\nu$. The optimal such coupling is known as the OT plan. A celebrated result by \citet{brenier1991polar} states that whenever $P$ has density with respect to the Lebesgue measure, the optimal $T^*$ exists and is unique. In that case, the Kantorovich and Monge formulations coincide and their solutions are linked by $\pi^* = (\text{Id}, T^*)_\sharp \mu$ where $\rm Id$ is the identity map. The Wasserstein-2 distance enjoys many desirable geometrical properties compared to other distances for distributions \citep{ambrosio2008gradient}. One such property is the characterization of geodesics in probability space \citep{agueh2011barycenters,santambrogio2015optimal}. When $\cP(\cX)$ is equipped with metric $W_2$, the unique minimal geodesic between any two distributions $\mu_0$ and $\mu_1$ is fully determined by $\pi$, the optimal transport plan between them, through the relation:
\begin{align}\label{eq:displacement}
  \rho_t^D: = ((1-t)x + t y )\sharp \pi(x,y) , \quad t \in [0,1],
\end{align}
known as \emph{displacement interpolation}. If the Monge map from $\mu_0$ to $\mu_1$ exists, the geodesic can also be written as
\begin{align}\label{eq:mccan}
  \rho^M_t: = ((1-t) {\rm Id} + t T^* )\sharp \mu_0 , \quad t \in [0,1],
\end{align}
and is known as \emph{McCann's interpolation}~\citep{mccann1997convexity}. It is easy to see that $\rho^M_0=\mu_0$ and $\rho^M_1=\mu_1$.

Such interpolations are only defined between two distributions.
When there are $m \ge 2$ marginal distributions $\{\mu_1, \ldots, \mu_m\}$, the \emph{Wasserstein barycenter}
\begin{align}
  \rho^B_a: = \argmin_\rho \sum_{i=1}^m a_i W_2^2(\rho, \mu_i) , \quad a \in \Delta_{m-1} \subset \mR_{\ge 0}^m
\end{align}
generalizes McCann's interpolation~\citep{agueh2011barycenters}. Intuitively, the interpolation parameters $a = [a_1,\dots, a_m]$ determine the `mixture proportions' of each dataset in the combination, akin to the weights in a convex combination of points in Euclidean space. In particular, when $a$ is a one-hot vector with $a_i=1$, then $\rho^B_a = \mu_i$, i.e., the barycenter is simply the $i$-th distribution. Barycenters have attracted significant attention in machine learning recently~\citep{srivastava2018scalable,korotin2021continuous}, but they remain challenging to compute in high dimension~\citep{fan2020scalable,korotin2022wasserstein}.

Another limitation of these interpolation notions is the non-convexity of $W_2^2$ along them. In Euclidean space, given three points $x_1,x_2,y \in \mR^d$, the function $t \mapsto \|x_t-y\|_2^2$, where $x_t$ is the interpolation $x_t = (1-t) x_1 + t x_2 $, is convex.
In contrast, in Wasserstein space, neither the function $t \mapsto W_2^2(\rho^M_t , \nu)$ or $a \mapsto W_2^2(\rho^B_a , \nu)$ are guaranteed to be convex~\citep[\S4.4]{santambrogio2017euclidean}. This complicates their theoretical analysis, such as in gradient flows.
To circumvent this issue, \citet{ambrosio2008gradient} introduced the \emph{generalized geodesic} of $\{\mu_1,\ldots,\mu_m\}$ with base $\nu\in\mathcal{P}(\mathcal{X})$:
\begin{align}
  \rho^G_a := \left(\sum_{i=1}^m a_i T^*_i \right)\sharp \nu  , \quad a \in \Delta_{m-1},
\end{align}
where $T^*_i $ is the optimal map from $\nu$ to $\mu_i$.

\begin{lemma}\label{lem:convex_w2}
  The functional $\mu \mapsto W_2^2( \mu, \nu)$ is convex along the generalized geodesics, and
  $
    W_2^2(\rho^G_a, \nu ) \le  \sum_{i=1}^m a_i W_2^2(\mu_i, \nu) .
  $
\end{lemma}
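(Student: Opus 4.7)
The strategy is to exploit the fact that the map $S_a := \sum_{i=1}^m a_i T^*_i$, whose pushforward of $\nu$ defines $\rho^G_a$, supplies a canonical (in general non-optimal) coupling between $\nu$ and $\rho^G_a$. This bypasses having to solve for the optimal transport plan and reduces both claims to bounds on a single integral against $\nu$.

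For the stated inequality, I would first observe that $(\mathrm{Id}, S_a)\sharp \nu$ is an admissible element of $\Pi(\nu, \rho^G_a)$, so the Kantorovich formulation of $W_2^2$ immediately yields
\[
W_2^2(\rho^G_a, \nu) \;\le\; \int_{\cX} \left\| x - \sum_{i=1}^m a_i\, T^*_i(x) \right\|_2^2 \d\nu(x).
\]
Rewriting the integrand as $\left\| \sum_{i} a_i ( x - T^*_i(x)) \right\|_2^2$ and applying Jensen's inequality with weights $a_i$ to the convex function $\|\cdot\|_2^2$ bounds it pointwise by $\sum_i a_i \|x - T^*_i(x)\|_2^2$. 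Integrating and recalling that $T^*_i$ is the Brenier map from $\nu$ to $\mu_i$, so $\int \|x - T^*_i(x)\|_2^2 \d\nu(x) = W_2^2(\mu_i, \nu)$, completes the second claim.

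For convexity along the generalized geodesic, I would upgrade the Kantorovich inequality above to an equality via Brenier's theorem: each $T^*_i$ is the gradient of a convex potential $\phi_i$, so $S_a = \nabla\bigl(\sum_i a_i \phi_i\bigr)$ is again the gradient of a convex function, and is therefore itself the optimal map from $\nu$ to $\rho^G_a$. This turns the earlier bound into the identity
\[
W_2^2(\rho^G_a, \nu) \;=\; \int_{\cX} \left\| x - S_a(x) \right\|_2^2 \d\nu(x),
\]
which is an explicit quadratic form in $a$. Differentiating the pointwise integrand twice in $a$ along any direction $b - a$ produces the nonnegative quantity $\|S_b(x) - S_a(x)\|_2^2$; integrating then shows $a \mapsto W_2^2(\rho^G_a, \nu)$ is convex on the simplex, and in particular convex along any 1D curve $s \mapsto \rho^G_{(1-s)a + sb}$ joining two endpoints.

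The main subtlety is the invocation of Brenier's theorem, which requires $\nu$ to be absolutely continuous with respect to Lebesgue measure; but this is already implicit in the definition of the generalized geodesic itself, since the existence of each Monge map $T^*_i$ from $\nu$ presupposes exactly that condition. Everything else reduces to Jensen's inequality and a pointwise second-derivative test.
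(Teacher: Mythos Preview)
Your proof is correct. The paper's own proof is a two-line deferral to the literature: it cites Santambrogio (2017, \S4.4) for the $m=2$ case and then Agueh--Carlier (2011, Prop.~7.5) to extend to general $m$. Your argument is instead self-contained. In fact your first step---the suboptimal coupling $(\mathrm{Id}, S_a)\sharp\nu$ combined with Jensen---already suffices for the inequality, which in this context \emph{is} the meaning of ``convex along generalized geodesics,'' so your second step is not strictly needed for the lemma as stated. That said, the observation you make there---that $S_a = \nabla\bigl(\sum_i a_i \phi_i\bigr)$ is itself the gradient of a convex potential and hence the Brenier map from $\nu$ to $\rho^G_a$---is exactly what the paper later invokes (attributed to McCann 1995) at the start of its proof of Proposition~\ref{prop:eq}, and it upgrades the inequality to the explicit quadratic identity in $a$ used there. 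Your route gives more insight and is independent of external citations; the paper's is terser but opaque.
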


Thus, unlike the barycenter, the generalized geodesic does yield a notion of convexity satisfied by the Wasserstein distance and is easier to compute. 
The proof of Lemma \ref{lem:convex_w2} is postponed to \S A.
For these reasons, we adopt this notion of interpolation for our approach. It remains to discuss how to use it on (labeled) datasets.

\subsection{Dataset distance}
Consider a dataset $\cD_P = \{ z^{(i)}\}_{i=1}^N = \{ x^{(i)}, y^{(i)}\}_{i=1}^N \overset{i.i.d.}{\sim} P(x,y)$. The Optimal Transport Dataset Distance (OTDD)~\citep{alvarez2020geometric} measures its distance to another dataset $\cD_Q$ as:
\begin{align}\label{eq:otdd}
   & d^2_{\OT} (\cD_P,\cD_Q ) = \nonumber                                                                    \\
   & \min_{\pi \in \Pi (P,Q)} \int \left( \|x-x'\|_2^2 + W_2^2(\alpha_y, \alpha_{y'}) \right) \d\pi (z,z' ),
\end{align}
which defines a proper metric between datasets. Here, $\alpha_y, \alpha_{y'}$ are class-conditional measures corresponding to $P(x|y)$ and $Q(x|y')$. This distance is strongly correlated with transfer learning performance, i.e., the accuracy achieved when training a model on
$\cD_P$ and then fine-tuning and evaluating on $\cD_P$. Therefore, it can be used to select pretraining datasets for a given target domain. Henceforth we abuse the notation $P$ to represent both a dataset and its underlying distribution for simplicity. To avoid confusion, we use $\nu$ and $\mu$ to represent distributions in the feature space (typically $\mathbb{R}^d$) and use $P,Q$ to represent distributions in the product space of features and labels.


\section{Dataset interpolation along generalized geodesic}
Our method consists of two steps: estimating optimal transport maps between the target dataset and all training datasets (\S\ref{sec:map}), and using them to generate a convex combination of these datasets by interpolating along generalized geodesics (\S\ref{sec:comb}). 
\fan{The OT map estimation is in feature space or original space depending on the dataset's dimension.} 
For some downstream applications, we will additionally project the target dataset into the `convex hull' of the training datasets (\S\ref{sec:proj}).


\subsection{Estimating optimal maps between labeled datasets}\label{sec:map}
The OTDD is a special case of Wasserstein distance, so it is natural to consider the alternative Monge (map-based) formulation to \eqref{eq:otdd}.
We propose two methods to approximate the OTDD map, one using the entropy-regularized OT and another one based on neural OT.

\paragraph{OTDD barycentric projection.}
Barycentric projections~\citep{ambrosio2008gradient,pooladian2021entropic} can be efficiently computed for entropic regularized OT using the Sinkhorn algorithm~\citep{sinkhorn1967diagonal}.
Assume that we have empirical distributions $\nu = \sum_{i=1}^{N_\nu} \frac{1}{N_\nu} \delta_{x_\nu^{(i)}} $ and $ \mu = \sum_{i=1}^{N_\mu} \frac{1}{N_\mu} \delta_{x_\mu^{(i)}} $, where $\delta_x$ is the Dirac function at $x$.
Denote all the samples compactly as matrices: $X_\nu = \left(x_\nu^{(1)}, \ldots, x_\nu^{(N_\nu)} \right) \in \mR^{N_\nu \times d}, X_\mu = \left(x_\mu^{(1)}, \ldots, x_\mu^{(N_\mu)} \right) \in \mR^{N_\mu \times d} $.
After solving the optimal coupling
$\pi^*: =
  \min_{\pi \in \Pi (\nu, \mu)}
  \sum_{i,j} \|x_\nu^{(i)}-x_\mu^{(j)}\|^2 \pi(i,j)
$,
the barycentric projection can be expressed as
$T_B (X_\nu) = N_\nu  \pi^* X_\mu.$
We extend this method to two datasets $Z_Q = \{X_Q, Y_Q \}, Z_P = \{X_P, Y_P \}$, where we have additional one-hot label data \fan{$Y_Q = (y_Q^{(1)}, \ldots, y_Q^{(N_Q)} ) \in \{0,1\}^{N_Q \times C_Q},
Y_P = (y_P^{(1)}, \ldots, y_{P}^{(N_P)}) \in
\{0,1\}^{N_P \times C_P }
$.} $C_Q$ and $C_P$ are the number of classes in dataset $Q$ and $P$.
We solve the optimal coupling $\pi^* \in \mR_{\ge 0}^{N_P \times N_Q}$ for OTDD~\eqref{eq:otdd} following the regularized scheme in ~\citet{alvarez2020geometric}.
The barycentric projection can then be written as:
\begin{align}\label{eq:bary_map}
  \cT_B (Z_Q) = [N_Q  \pi^* X_P , N_Q  \pi^* Y_P].
\end{align}
The visualization of barycentric projected data appears in Figure \ref{fig:bary_projection}.
\begin{figure}[t]
  \centering
  \begin{subfigure}{0.51\textwidth}
    \centering
    \includegraphics[width=1\linewidth]{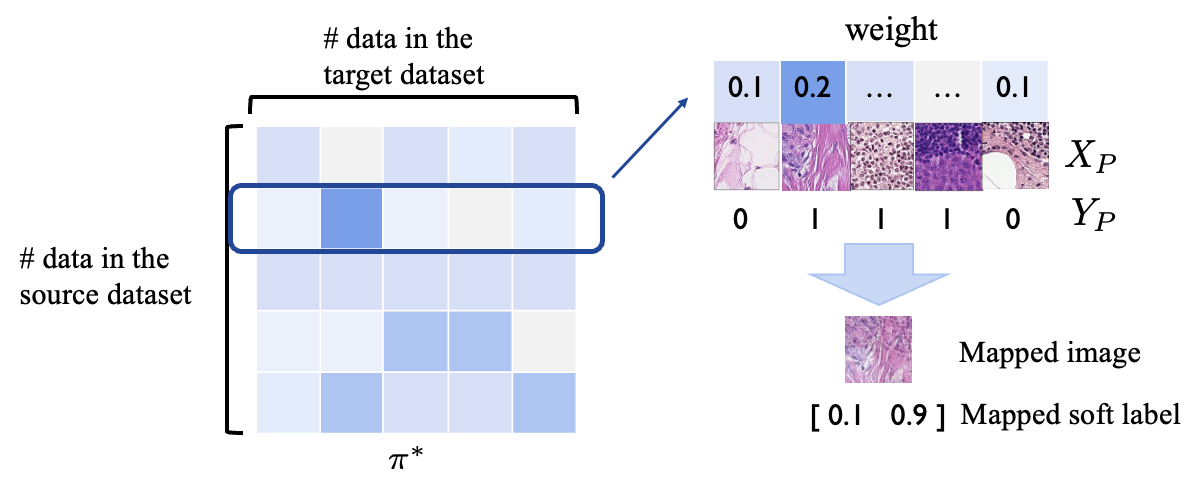}
  \end{subfigure}
  \caption{Visualization of OTDD barycentric projection on binary PCAM  dataset. \fan{
  We first solve the optimal coupling $\pi^* \in [0,1]^{N_Q \times N_P}$ for the problem \eqref{eq:otdd} using entropy regularization. Next, we map the $i$-th datapoint in the source dataset to a pair consisting of a weighted image and a weighted soft label. The weight vector, extracted from the $i$-th row of the coupling $\pi^*$, is then normalized to sum to 1. As a result, the mapped image (or soft label) is obtained as a convex combination of all the images (or one-hot labels) in the target dataset.
  }
  }
  \label{fig:bary_projection}
\end{figure}
However, this approach has two important limitations: it can not naturally map out-of-sample data and it does not scale well to large datasets (due to the quadratic dependency on sample size). \fan{To relieve the scaling issue, we will use batchified version of OTDD barycentric projection in this paper (see complexity discussion in \S\ref{sec:conclude}).}

\paragraph{OTDD neural map.} Inspired by recent approaches to estimate Monge maps using neural networks \citep{rout2022generative,fan2021scalable}, we design a similar framework for the OTDD setting. \citet{fan2021scalable} approach the Monge OT problem with general cost functions by solving its max-min dual problem $$\sup_f\inf_T  \int \left[ c( x ,T(x))-f(T(x)  )\right] \d\nu(x)  + \int f(x') \d\mu(x').$$
We extend this method to the distributions involving labels by introducing an additional classifier in the map. Given two datasets $P,Q$, we parameterize the map $\cT_N: \mR^d \times [0,1]^{C_Q} \rightarrow  \mR^d \times [0,1]^{C_P}$ as
\begin{align}\label{eq:map}
  \cT_N(z) = \cT_N(x,y)  =[\bar x ; \bar y ]= [G(z) ; \ell(G(z))],
\end{align}
where $G:
  \mR^d \times [0,1]^{C_Q} \rightarrow \mR^d $ is the pushforward feature map, and the $\ell:  \mR^d \rightarrow [0,1]^{C_P} $ is a frozen classifier that is pre-trained on the dataset $P$.
Notice that, with the cost $c(z, \cT(z)) =  \| x- G( z)\|_2^2 + W_2^2(\alpha_{y}, \alpha_{\bar y} ) $, the Monge formulation of OTDD \eqref{eq:otdd} reads $\inf_{T\sharp Q = P }  \int \| x- G( z)\|_2^2 + W_2^2(\alpha_{y}, \alpha_{\bar y} ) \d Q(z).$
We therefore propose to solve the max-min dual problem
\begin{align}\label{eq:max-min}
  \sup_f \inf_G  \int \left[ \| x- G( z)\|_2^2 + W_2^2(\alpha_{y}, \alpha_{\bar y} )\right] \d Q(z) \nonumber \\
  - \int f(\bar x , \bar y ) \d Q(z)
  + \int f(x' ,y') \d P(z').
\end{align}

\begin{figure}[t]
  \centering
  \begin{subfigure}{0.5\textwidth}
    \centering
    \includegraphics[width=1\linewidth]{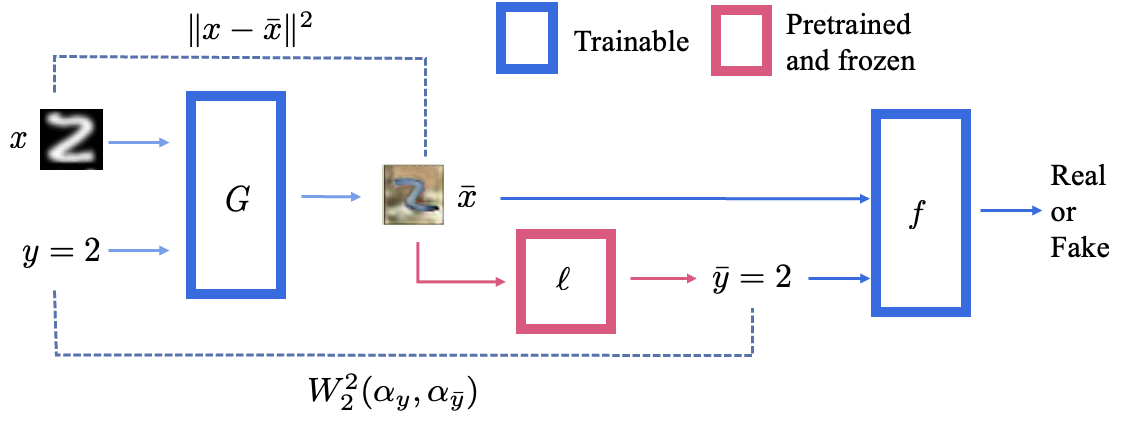}
  \end{subfigure}
  \caption{Training paradigm for learning the OTDD neural map betweem two datasets (distributions), parametrized via a pushforward feature map $G$ and a labeling function $\ell$, using 
  a dual potential
  $f$.
  }
  \label{fig:neural_map}
\end{figure}

Implementation details are provided in \S B.
Compared to previous conditional Monge map solvers~\citep{bunne2022supervised,asadulaev2022neural}, the two methods proposed here: (i) do not assume class overlap across datasets, allowing for maps between datasets with different label sets; (ii) are invariant to class permutation and re-labeling; (iii) do not force one-to-one class alignments, e.g., samples can be mapped across similar classes.

\subsection{Convex combination in dataset space}\label{sec:comb}
\begin{figure}[ht!]
  \centering
  \begin{subfigure}{0.51\textwidth}
    \centering
    \includegraphics[width=1\linewidth]{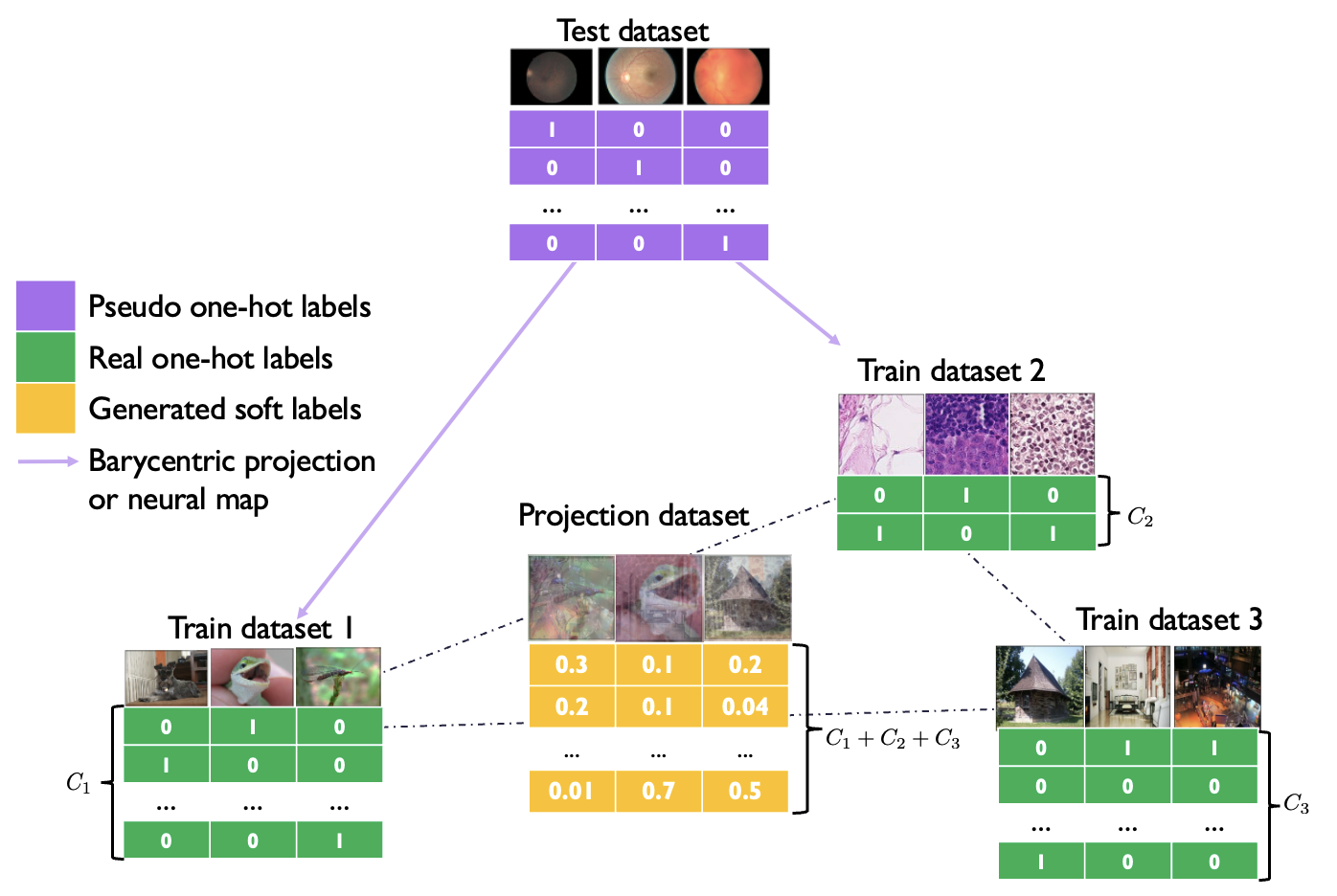}
  \end{subfigure}
  \caption{
    In few-shot settings, we use pseudo-labels for the test dataset, generated e.g.~via kNN using the few-shot examples. 
    If more labeled data from the test dataset is available, we use it instead of the pseudo-labels. \fan{The projection dataset has the same number of samples as the test dataset.}
  }
  \label{fig:convex_comb}
\end{figure}

Computing generalized geodesics requires constructing convex combinations of datapoints from different datasets. Given a weight vector $a \in \Delta_{m-1}$, features can be naturally combined as $x_a = \sum_{i=1}^m a_i x_i $. But combining labels is not as simple because: (i) we allow for datasets with a different number of labels, so adding them directly is not possible; (ii) we do not assume different datasets have the same label sets, e.g. MNIST (digits) vs CIFAR10 (objects). Our solution is to represent all labels in the same dimensional space by padding them with zeros in all entries corresponding to other datasets. As an example, consider three datasets with $2,3$, and $4$ classes respectively. Given a label vector $y_1\in \R^3$ for the first one, we embed it into $\R^9$ as
$\tilde{y}_1 = [y_1; \mathbf{0}_3; \mathbf{0}_4]^\top.$
Defining $\tilde{y}_2, \tilde{y}_3$ analogously, we compute their combination  as
$y_a = a_1\tilde{y}_1 + a_2\tilde{y}_2 + a_3\tilde{y}_3$.
This representation is lossless and preserves the distinction of labels across datasets.
The visualization of our convex combination is in Figure \ref{fig:convex_comb}.


\begin{figure*}[ht!]
  \centering
  \begin{subfigure}{0.25\textwidth}
    \centering
    \includegraphics[width=1\linewidth]{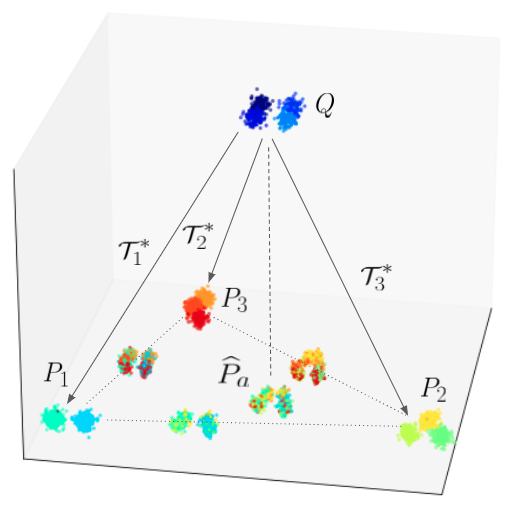}
    \caption{}
  \end{subfigure}
  \begin{subfigure}{0.65\textwidth}
    \centering
    \includegraphics[width=1\linewidth]{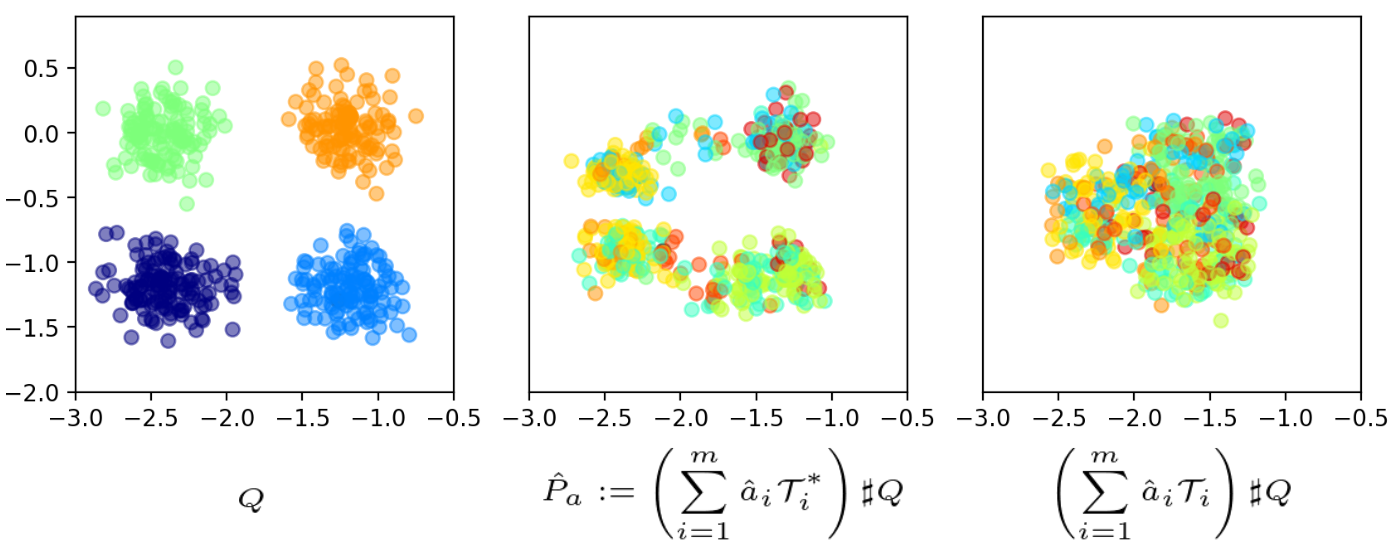}
    \caption{}
  \end{subfigure}
  \caption{\textbf{Visualization and comparison of dataset interpolation methods.} (a) The reference dataset $Q$ (with color-coded classes) is projected onto the generalized geodesic of the training datasets $P_i$, resulting in $\widehat{P}_a$. (b) 2D visualizations of (left-to-right): dataset $Q$, the `optimal' interpolated dataset $ \hat{P}_a := \left(\sum_{i=1}^m \hat a_i \cT^*_i \right)\sharp Q$ using the true OTDD maps $\cT_i^*$ , and a naively interpolated dataset $\left(\sum_{i=1}^m \hat a_i \cT_i \right)\sharp Q $ using randomly generated maps $\cT_i$.
  }
  \label{fig:proj_2d}
\end{figure*}
\subsection{Projection onto generalized geodesic of datasets}\label{sec:proj}

We now put together the components in Sec \ref{sec:map} and \ref{sec:comb} to construct generalized geodesics between datasets in two steps. First, we compute OTDD maps $\cT_i^*$ between $Q$ and all other datasets $P_i, i=1,\ldots, m$ using the discrete or neural OT approaches. Then, for any interpolation vector  $a \in \Delta_{m-1}$ we identify a dataset along the generalized geodesic via
\begin{align}
  P_a := \left(\sum_{i=1}^m a_i \cT^*_i \right)\sharp Q.
\end{align}
By using the convex combination method in \S\ref{sec:comb} for labeled data, we can efficiently sample from $P_a$. 

Next, we find the dataset $P^*_a$ that minimizes the distance between $P_a$ and $Q$, i.e. the projection of $Q$ onto the generalized geodesic. We first approach this problem from a Euclidean viewpoint.
Suppose there are several distributions $\{\mu_i\}_{i=1}^m$ and an additional distribution $\nu$ on Euclidean space $\mR^d$. Lemma \ref{lem:convex_w2} guarantees
there exists a unique parameter $a^*$ that minimizes $W_2^2(\rho_{a}^G, \nu)$. However, finding $a^*$ is far from trivial because there is no closed-form formula of the map $a \mapsto W_2^2(\rho_{a}^G, \nu)$ and it can be expensive to calculate $W_2^2(\rho_{a}^G, \nu)$ for all possible $a$. To solve this problem, we resort to another transport distance: the (2,$\nu$)-transport metric.

\begin{definition}[\citet{craig2016exponential}]
  Given distributions $\mu_i, \mu_j$, the (2,$\nu$)-transport metric between them is given by $$W_{2,\nu}(\mu_i,\mu_j) := \left( \int \|T_i^*(x) - T_j^*(x) \|_2^2 \d \nu (x) \right)^{1/2},$$ where $T_i^*$ is the optimal map from $\nu$ to $\mu_i$.
\end{definition}
When $\nu$ has a density with respect to Lebesgue measure $W_{2,\nu}$ is a valid metric~\citep[Prop. 1.15]{craig2016exponential}. Moreover, we can derive a closed-form formula for the map $a \mapsto W_{2, \nu }^2(\rho_{a}^G, \nu)$.
\begin{proposition}\label{prop:eq}
  $W_{2, \nu }^2(\rho^G_a, \nu )
    =  \sum_{i=1}^m a_i W_{2,\nu }^2(\mu_i, \nu ) - \frac{1}{2} \sum_{i \neq j} a_i a_j W_{2,\nu }^2(\mu_i, \mu_j ).
  $
\end{proposition}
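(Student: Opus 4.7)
The plan is to compute $W_{2,\nu}^2(\rho^G_a,\nu)$ directly from the definition, after identifying the optimal map from $\nu$ to $\rho^G_a$. First I would observe that, since $T^*_i$ is the Brenier map from $\nu$ to $\mu_i$, each $T^*_i$ is the gradient of a convex function $\phi_i$. Then $\sum_i a_i T^*_i = \nabla(\sum_i a_i \phi_i)$ is itself the gradient of a convex function (convexity is preserved by non-negative linear combinations) and by construction pushes $\nu$ forward to $\rho^G_a$. Brenier's uniqueness theorem therefore identifies $\sum_i a_i T^*_i$ as the optimal map from $\nu$ to $\rho^G_a$. Applied in the definition of $W_{2,\nu}$, with the identity serving as the optimal map from $\nu$ to itself, this gives
\begin{equation*}
W_{2,\nu}^2(\rho^G_a,\nu) = \int \Bigl\| \sum_{i=1}^m a_i T^*_i(x) - x \Bigr\|_2^2 \, \d\nu(x).
\end{equation*}

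Next, using $\sum_i a_i = 1$, I would rewrite the integrand as $\bigl\|\sum_i a_i (T^*_i(x) - x)\bigr\|_2^2$ and expand the square into a double sum $\sum_{i,j} a_i a_j \langle T^*_i(x)-x,\, T^*_j(x)-x\rangle$. The polarization identity
\begin{equation*}
\langle u, v \rangle = \tfrac{1}{2}\bigl( \|u\|_2^2 + \|v\|_2^2 - \|u-v\|_2^2 \bigr),
\end{equation*}
applied to $u = T^*_i(x)-x$ and $v = T^*_j(x)-x$ (so that $u-v = T^*_i(x) - T^*_j(x)$), converts each cross term, after integration against $\nu$, into a combination of $W_{2,\nu}^2(\mu_i,\nu)$, $W_{2,\nu}^2(\mu_j,\nu)$, and $W_{2,\nu}^2(\mu_i,\mu_j)$ by the definition of the $(2,\nu)$-metric.

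Finally I would collect terms. Using $\sum_j a_j = 1$, the sums $\sum_{i,j} a_i a_j W_{2,\nu}^2(\mu_i,\nu)$ and $\sum_{i,j} a_i a_j W_{2,\nu}^2(\mu_j,\nu)$ both reduce to $\sum_i a_i W_{2,\nu}^2(\mu_i,\nu)$, contributing $\sum_i a_i W_{2,\nu}^2(\mu_i,\nu)$ in total after the factor $\tfrac{1}{2}$. For the third piece, the diagonal $i=j$ terms vanish since $W_{2,\nu}(\mu_i,\mu_i) = 0$, leaving precisely $-\tfrac{1}{2}\sum_{i\neq j} a_i a_j W_{2,\nu}^2(\mu_i,\mu_j)$, which matches the claimed identity.

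The only non-routine step is the opening identification of the optimal map, which rests on Brenier's theorem and therefore on $\nu$ having a density with respect to Lebesgue measure (the same hypothesis under which $W_{2,\nu}$ is declared a metric in the excerpt). The remainder is an algebraic expansion via polarization, so I expect no further obstacles. A symmetric alternative would be to recognize the right-hand side as the variance of the $\mR^d$-valued random vector $T^*_I(X) - X$ where $X \sim \nu$ and $I$ is independent with law $a$, but the polarization route is more elementary and stays entirely within the definitions already introduced.
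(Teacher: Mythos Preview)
Your proposal is correct and follows essentially the same approach as the paper: identify $\sum_i a_i T_i^*$ as the optimal map from $\nu$ to $\rho_a^G$, then expand the resulting integral algebraically. The only cosmetic differences are that the paper phrases the optimality step via preservation of cyclical monotonicity (citing \citet{mccann1995existence}) rather than via convex combinations of Brenier potentials, and carries out the expansion through a direct scalar identity rather than polarization---but these are two presentations of the same argument.
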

This equation
implies that given distributions $\{\mu_i\}, \nu$ in Euclidean space, we can trivially solve the optimal $a^*$ that minimizes $W_{2, \nu }^2(\rho^G_a, \nu ) $ by a quadratic programming solver\footnote{We use the implementation \url{https://github.com/stephane-caron/qpsolvers}}. The proof (\S A
) relies on Brenier's theorem.
Inspired by this, we also define a transport metric for datasets:
\begin{definition}\label{def:Q_ds_distance}
  The squared (2,$Q$)-dataset distance is  $$\cW^2_{2,Q}(P_i, P_j) := \int \left( \| x_i -  x_j \|_2^2 +
    W_2^2(\alpha_{y_i}, \alpha_{y_j})
    \right) \d Q(z), $$ where $ [ x_i;  y_i ] =\cT_i^*(z)$ and $\cT_i^*$ is the OTDD map from $Q$ to $P_i$.
\end{definition}
Denote $\cP_{2,Q} (\cX \times \cP(\cX) ) $ as the set of all probability measures $P$ that satisfy $  d_\OT (P,Q) < \infty $ and the OTDD map from $Q$ to $P $ exists. The following result shows that (2,$Q$)-dataset distance is a proper distance. The proof is again deferred to \S A.
\begin{proposition}\label{prop:metric}
  $\cW_{2,Q}$ is a valid metric on $\cP_{2,Q}(\cX \times \cP(\cX))$.
\end{proposition}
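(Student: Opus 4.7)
The plan is to show that $\cW_{2,Q}$ is a composite of two already-metric structures: the OTDD ground metric on $\cX \times \cP(\cX)$ pulled back to $\cP_{2,Q}$ via the OTDD maps $\cT_i^*$, and the $L^2(Q)$ norm over the base. With that framing, three of the four metric axioms drop out almost immediately, and the triangle inequality follows from Minkowski's inequality.

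Concretely, I would first observe that the integrand of $\cW_{2,Q}^2(P_i,P_j)$ can be rewritten as $d_Z(\cT_i^*(z),\cT_j^*(z))^2$, where $d_Z((x,y),(x',y')) := \sqrt{\|x-x'\|_2^2 + W_2^2(\alpha_y,\alpha_{y'})}$ is the ground metric used inside OTDD on the augmented feature-label space $\cX\times\cP(\cX)$ (it is a metric since it is the square-root of a sum of squared metrics, which combines two metric structures in the usual Pythagorean way). Thus
\begin{equation*}
\cW_{2,Q}(P_i,P_j) \;=\; \bigl\| d_Z(\cT_i^*(\cdot),\cT_j^*(\cdot)) \bigr\|_{L^2(Q)} .
\end{equation*}
Non-negativity and symmetry are then immediate from the corresponding properties of $d_Z$. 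For identity of indiscernibles, $\cW_{2,Q}(P_i,P_j)=0$ forces $\cT_i^*(z)=\cT_j^*(z)$ for $Q$-a.e.\ $z$; pushing forward by $Q$ then yields $P_i = \cT_i^*\sharp Q = \cT_j^*\sharp Q = P_j$. The converse direction uses uniqueness of the OTDD map from $Q$ (which is built into the definition of $\cP_{2,Q}$): if $P_i=P_j$ then the OTDD map from $Q$ to this common distribution is uniquely determined, so $\cT_i^*=\cT_j^*$ $Q$-a.e.\ and the integral vanishes.

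The triangle inequality is the only step with real content. For any three $P_i,P_j,P_k \in \cP_{2,Q}$, applying the triangle inequality of $d_Z$ pointwise in $z$ gives
\begin{equation*}
d_Z(\cT_i^*(z),\cT_k^*(z)) \;\le\; d_Z(\cT_i^*(z),\cT_j^*(z)) + d_Z(\cT_j^*(z),\cT_k^*(z)).
\end{equation*}
Taking $L^2(Q)$ norms on both sides and using Minkowski's inequality on the right yields exactly $\cW_{2,Q}(P_i,P_k) \le \cW_{2,Q}(P_i,P_j) + \cW_{2,Q}(P_j,P_k)$.

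I expect the main (and really the only) subtle point to be justifying that $d_Z$ is itself a metric on $\cX\times\cP(\cX)$, i.e.\ that the ground cost inside OTDD is a squared metric. This is standard (it is essentially the construction that makes OTDD itself a metric in \citet{alvarez2020geometric}), but it should be cited explicitly. A secondary subtlety is the uniqueness of $\cT_i^*$ required for the ``only if'' direction of identity of indiscernibles; as long as $Q$ is regular enough for Brenier-type uniqueness to apply in the OTDD setting (which is implicit in the membership condition for $\cP_{2,Q}$), this is routine.
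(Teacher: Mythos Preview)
Your argument is correct and structurally matches the paper's: both use the pointwise triangle inequality on the ground cost followed by Minkowski for the triangle inequality of $\cW_{2,Q}$, and both get symmetry and nonnegativity for free. The one genuine difference is in identity of indiscernibles. You argue directly: $\cW_{2,Q}(P_i,P_j)=0$ forces $\cT_i^*=\cT_j^*$ $Q$-a.e., hence $P_i=P_j$ by pushforward, and the converse by uniqueness of the OTDD map. The paper instead observes that $\cW_{2,Q}(P_i,P_j)\ge d_{\OT}(P_i,P_j)$ (since $(\cT_i^*,\cT_j^*)\sharp Q$ is a feasible coupling for the OTDD problem between $P_i$ and $P_j$) and then invokes the already-known fact that $d_{\OT}$ is a metric. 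The paper's route is slightly cleaner because it sidesteps verifying that $d_Z$ is a full metric and does not explicitly invoke uniqueness of the OTDD map; your route is more self-contained but leans on that uniqueness assumption, which you correctly flag as implicit in the definition of $\cP_{2,Q}$.
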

Unfortunately, in this case $\cW^2_{2,Q}(P_a, Q)$ does not have an analytic form like before because Brenier's theorem may not hold for a general transport cost problem.
However, we still borrow this idea and define an approximated projection $\widehat{P}_a$ as the minimizer of function
\begin{align}\label{eq:ds_eq}
   & \cW^2(P_a, Q): =  \nonumber                                                                       \\
   & \sum_{i=1}^m a_i \cW^2_{2,Q}(P_i, Q) - \frac{1}{2} \sum_{i \neq j} a_i a_j \cW^2_{2,Q}(P_i, P_j),
\end{align}
which is an analog of Proposition \ref{prop:eq}. 
\fan{Since ${P}_a$ is defined by its interpolation weight $a$, solving $\widehat{P}_a$ is equivalent to finding a weight 
\begin{align}\label{eq:hat_a}
    \hat a = \argmin_{a \in \Delta_{m-1}} \cW^2(P_a, Q),
\end{align}
which is a simple quadratic programming problem.}
Unlike the Wasserstein distance, $\cW^2_{2,Q}(\cdot, \cdot)$ is easier to compute because it does not involve optimization, so it is relatively cheap to find the minimizer of $\cW^2(P_a, Q)$.
Experimentally, we observe that
$W_{2, Q }^2(P_a, Q )$ is predictive of model transferability across tasks. Figure \ref{fig:proj_2d}(a) illustrates this projection on toy 3D datasets, color-coded by class.



\section{Experiments}

\subsection{Learning OTDD maps}
In this section, we visualize the quality of the learnt OTDD maps on both  synthetic and realistic datasets.
\paragraph{Synthetic datasets}





Figure \ref{fig:proj_2d} (b) illustrates the role of the optimal map in estimating the projection of a dataset into the generalized geodesic hull of three others. Using maps $\cT_i^*$ estimated via barycentric projection \eqref{eq:bary_map} results in better preservation of the four-mode class structure, whereas using non-optimal maps $\cT_i$ based on random couplings (as the usual \textit{mixup} does) destroys the class structure.

\paragraph{*NIST datasets}
\begin{figure}[h!]
  \centering
  \begin{subfigure}{0.48\textwidth}
    \centering
    \includegraphics[width=1\linewidth]{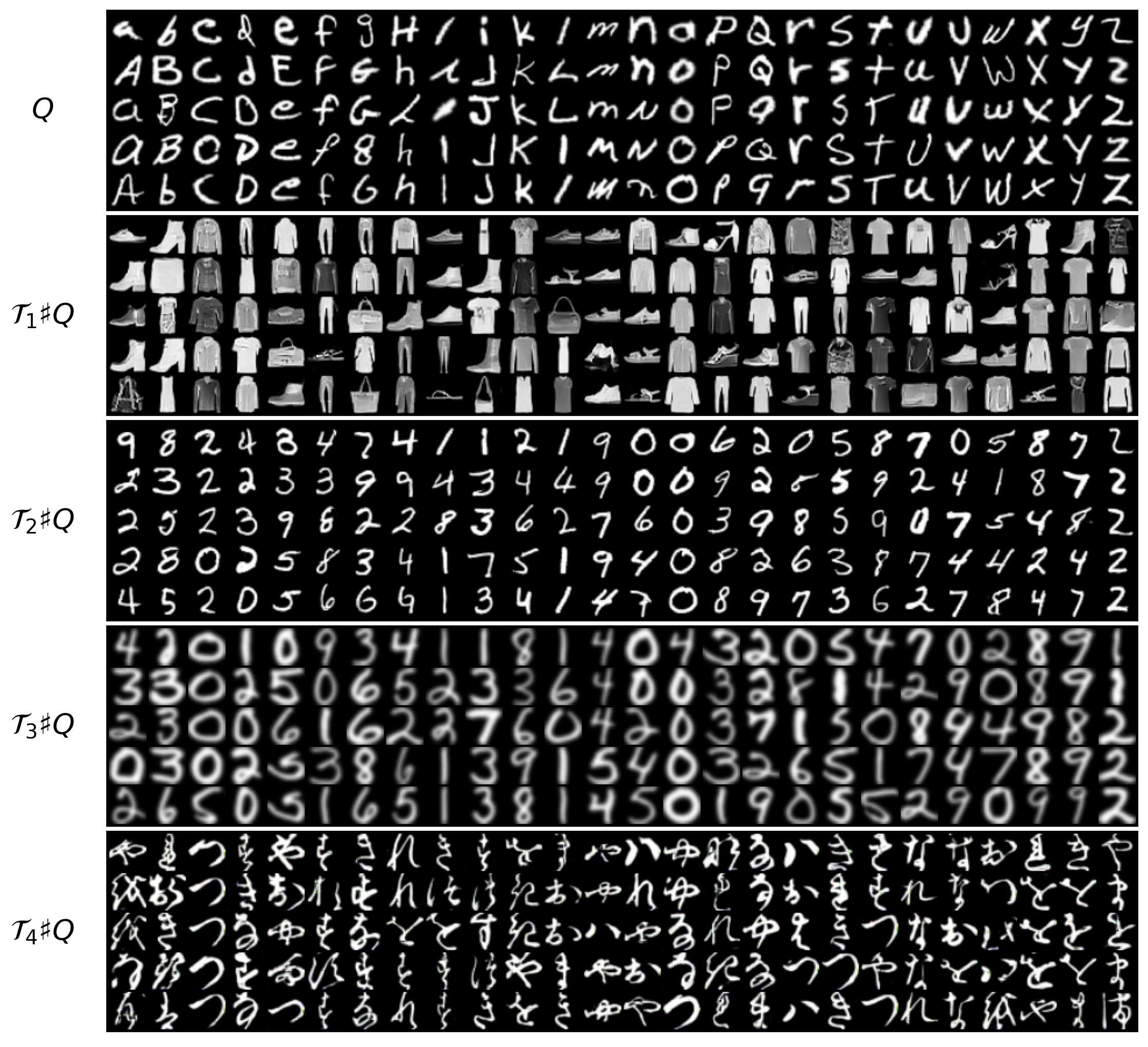}
  \end{subfigure}
  \caption{Datasets generated by pushing forward $Q$ (the EMNIST dataset) towards Fashion-MNIST, MNIST, USPS, KMNIST, using OTDD maps $\mathcal{T}_i$, obtained using the neural OT method described in Section~\ref{sec:map}.}
  \label{fig:EMNIST}
\end{figure}
In Figure \ref{fig:EMNIST}, we provide qualitative results of OTDD map from EMNIST (letter)~\citep{cohen2017emnist} dataset to all other *NIST dataset and USPS dataset. At this point, we can confirm three traits of OTDD map, which are mentioned at the end of \S\ref{sec:map}.

\begin{figure*}[ht!]
  \centering
  \begin{subfigure}{1\textwidth}
    \centering
    \includegraphics[width=1\linewidth]{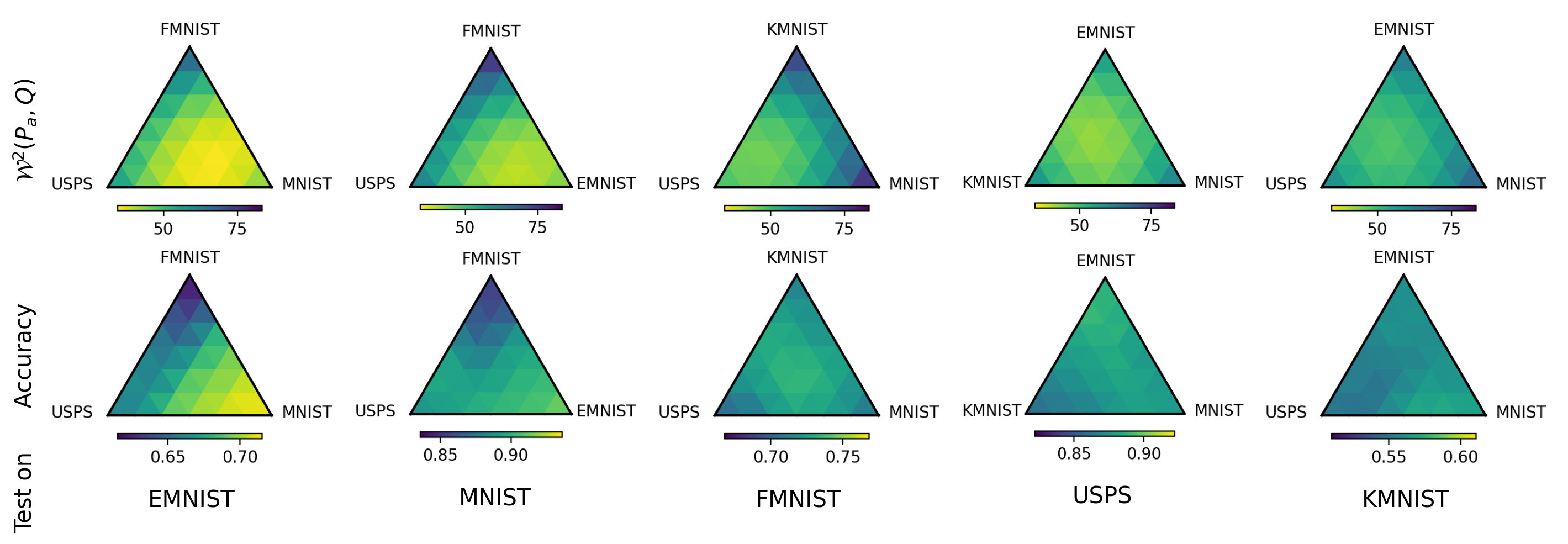}
  \end{subfigure}
  \caption{Relationship between the function $\cW^2(P_a, Q)$ and the accuracy of the fine-tuned model.
    The model trained on the projection dataset $\hat P_a$, i.e. the minimizer of $\cW^2(P_a, Q)$, tends to have a better generalization accuracy.
    The training datasets are marked on the vertexes of each ternary plot. Each ternary plot is an average of 5 runs with distinct random seeds.
  }
  \label{fig:ternary}
\end{figure*}

1) We don't assume a known source label to target label correspondence. So we can map between two irrelevant datasets such as EMNIST and FashionMNIST.
2) The map is invariant to the permutation of label assignment. For example, we show two different labelling in Figure 
\ref{fig:labelling}, 
and the final OTDD map will be the same.
3) It doesn't enforce the label-to-label mapping but would follow the feature similarity. From Figure \ref{fig:EMNIST}, we notice many cross-class mapping behaviors. For example, when the target domain is USPS~\citep{hull1994database} dataset, the lower-case letter "l" is always mapped to digit 1, and the capital letter "L" is mapped to other digits such as 6 or 0 because the map follows the feature similarity.

\subsection{Transfer learning on *NIST datasets }\label{sec:nist}

Next, we use our framework to generate new pretraining datasets for 
transfer
learning. 
Preceding works illustrate that the transfer learning performance can be quite sensitive to the type of test datasets 
if there is abundant training data from the test task~\citep[Table 1]{zhai2019large}. Thus, we will focus on the few-shot setting, where we only have few labeled data from the test task.
We first show that the generalization ability of training models has a strong correlation with the distance $\cW^2_{2,Q}(P_a, Q) $. Then we compare our framework with several baseline methods.

\paragraph{Setup}
Given $m$ labeled pretraining datasets $\{P_i\}$, we consider a few-shot task in which only a limited amount of data from the target domain is labeled, e.g. 5 samples per class. The goal is to find a single dataset of size comparable to any individual $P_i$ that yields the best generalization to the target domain when pre-training a model on it and fine-tuning on the target few-shot data. Here, we seek this training dataset within those generated by generalized geodesics $\{P_a\}$, which can be understood as weighted interpolations of the training datasets $\{P_i\}$. Note this includes individual datasets as particular cases when $a$ is a one-hot vector.




\begin{table*}[t]
  \caption{
  \textbf{Pretraining on synthetic data}. For each of the *NIST datasets, we treat it as the target domain and pretrain a neural net on a synthetic dataset generated as a combination of the remaining dataset with three interpolation methods. Here we show 5-shot transfer accuracy (mean $\pm$ s.d.~over 5 runs). The first baseline is to create a synthetic dataset as a training dataset by Mixup among datasets. For Mixup, we randomly sample data from each training dataset, and do the convex combination of them with weight $\hat a$ (see Eq. \eqref{eq:hat_a}). We use the same convex combination method in \S\ref{sec:comb}, thus this Mixup baseline is equivalent to our framework with suboptimal OTDD maps. The other two baselines (the bottom block) skip the transfer learning part, and directly train the model or solve 1-NN on the few-shot test dataset. 
  }
  \begin{center}
    \begin{small}
      \resizebox{\textwidth}{!}{
        \begin{tabular}{ccccccc}
          \toprule
          Methods                     & MNIST-M             & EMNIST              & MNIST               & FMNIST              & USPS                & KMNIST              \\
          \midrule
          OTDD barycentric projection & {\bf42.10$\pm$4.37} & {\bf67.06$\pm$2.55} & {\bf93.74$\pm$1.46} & {\bf70.12$\pm$3.02} & 86.01$\pm$1.50      & {\bf52.55$\pm$2.73} \\
          OTDD neural map             & 40.06$\pm$4.75      & 65.32$\pm$1.80      & 88.78$\pm$3.85      & 70.02$\pm$2.59      & 83.80$\pm$1.60      & 50.32$\pm$3.10      \\
          Mixup with weights $\hat a$                      & 33.85$\pm$2.22      & 60.95$\pm$1.38      & 88.68$\pm$1.57      & 66.74$\pm$3.79      & {\bf88.61$\pm$2.00} & 48.16$\pm$3.38      \\
          \midrule
          Train on few-shot dataset   & 19.10$\pm$3.57      & 53.60$\pm$1.18      & 72.80$\pm$3.10      & 60.50$\pm$3.07      & 80.73$\pm$2.07      & 41.67$\pm$2.11      \\
          1-NN  on few-shot dataset   & 20.95$\pm$1.39      & 39.70$\pm$0.57      & 64.50$\pm$3.32      & 60.92$\pm$2.42      & 73.64$\pm$2.35      & 40.18$\pm$3.09      \\
          \bottomrule
        \end{tabular}
      }
    \end{small}
  \end{center}
  \label{tab:compare}
\end{table*}

\paragraph{Connection to generalization}
The closed-form expression of $W_{2, \nu}^2 (\rho_a^G, \nu)$ (Prop.~\ref{prop:eq}) provides a distance between a base distribution $\nu$ and the distribution along generalized geodesic $\rho_a^G$ in Euclidean space.
We study its analog \eqref{eq:ds_eq} for labeled datasets $Q$ and $\{P_i\}$ and visualize it in Figure \ref{fig:ternary} (first row).
To investigate the generalization abilities of models trained on different datasets, we discretize the simplex $\Delta_2$ to obtain $36$ interpolation parameters $a$, and train a 5-layer LeNet  classifier on each $P_a$. Then we fine-tune all of these classifiers on the few-shot test dataset $Q$ with only 20 samples per each class. We control the same number of training iterations and fine-tuning iterations across all experiments.
The second row of Figure \ref{fig:ternary} shows fine-tuning accuracy. Comparing the first row and the second, we find the accuracy and $\cW^2(P_a, Q)$ are highly correlated. This implies that the model trained on the minimizer dataset of $\cW^2(P_a, Q)$ tends to have a better generalization ability. We fix the same colorbar range for all heatmaps across datasets to highlight the impact of training dataset choice. A more concrete visualization of the correlation between $\mathcal{W}^2(P_a, Q)$ and accuracy is shown in Figure 
\ref{fig:corr}.

For some test datasets, the choice of training dataset strongly affects the fine-tuning accuracy. For example, when $Q$ is EMNIST and the training dataset is FMNIST, the fine-tuning accuracy is only $\sim 60\%$, but this can be improved to $> 70\%$ by choosing an interpolated dataset closer to MNIST. This is reasonable because MNIST is more similar to EMNSIT than FMNIST or USPS. To some test datasets like FMNIST and KMNIST, this difference is not so obvious because all training datasets are all far away from the test dataset.


\paragraph{Comparison with baselines.}

Next, we compare our method with several baseline methods on NIST datasets. In each set of experiments, we select one *NIST dataset as the target domain, and use the rest for pre-training. We consider a 5-shot task, so we \textbf{randomly} choose 5 samples per class to be the labeled data, and treat the remaining samples as unlabeled. Our method first trains a model on $\widehat{P}_a$, and fine-tunes the model on the 5-shot target data. To obtain $\widehat{P}_a$, we use barycentric projection or neural map to approximate the OTDD maps from the test to training datasets. Our results are shown in the first two rows in Table \ref{tab:compare}.
Overall, transfer learning can bring additional knowledge from other domains and improve the test accuracy by at most 21$\%$. Among the methods in the first block, training on datasets generated by OTDD barycentric projection outperforms others except USPS dataset, where the difference is only about 2.6$\%$.
\subsection{Transfer learning on VTAB datasets}\label{sec:vtab}

Finally, we use our method for transfer learning with large-scale VTAB datasets~\citep{zhai2019large}. In particular, we take \href{https://www.robots.ox.ac.uk/~vgg/data/pets/}{Oxford-IIIT Pet dataset} as the target domain, and use \href{https://data.caltech.edu/records/mzrjq-6wc02}{Caltech101}, \href{https://www.robots.ox.ac.uk/~vgg/data/dtd/}{DTD}, and \href{https://www.robots.ox.ac.uk/~vgg/data/flowers/102/}{Flowers102} for pre-training. To encode a richer geometry in our interpolation, we embed the datasets using a masked auto encoder (MAE)~\citep{he2022masked} and learn the OTDD map in this ($\sim$200K dimensional) latent space.
Since OTDD barycentric projection consistently works better than OTDD neural map (see Table \ref{tab:compare}), we only use barycentric projection 
in this section.  We use ResNet-18 as the model architecture and pre-train the model on decoded MAE images (interpolated dataset) or original images (single dataset). 
\fan{Meanwhile, Mixup baseline is over pixel space and therefore
does not utilize embeddings at all.}

\begin{table}[ht]
\caption{\textbf{Transfer Learning on VTAB datasets}. The table shows relative improvement (w.r.t.~a no-transfer baseline) of test accuracy on \textsc{Oxford-IIIT Pet} (mean $\pm$ std over 5 runs) given only 1000 \fan{\text{randomly selected}} samples of this dataset to fine-tune. 
The first three rows show single-pretraining-dataset baselines, and the remaining rows show methods that pretrain on a synthetic interpolation of these three, generated using Mixup or our proposed OTDD Map, using uniform or $\hat a$ (see Eq.~\eqref{eq:hat_a}) dataset interpolation weights. 
The pooling baseline pretrains on a dataset including all the pre-training datasets.
To construct the sub-pooling pretraining dataset, for each training sample
 from the target dataset (\textsc{Pet}) we find its 10-nearest neighbors (in embedding space) from across all pretraining datasets, and label them as belonging to the class from the target domain.
}\label{tab:vtab}\centering
{\renewcommand{\arraystretch}{1.1}%
\begin{tabular}{cccc}
\toprule
Pre-Training & Map & Weights & Rel. Improv. ($\%$) \\ \midrule
\textsc{Caltech101}  & $-$ & $-$ & 59.68 $\pm$ 41.44 \\ 
\textsc{DTD} & $-$ & $-$ &-1.17 $\pm$ 9.52 \\ 
\textsc{Flowers102} & $-$ & $-$ & -2.45 $\pm$ 26.25 \\
Pooling & $-$ & $-$ & {28.96 $ \pm$ 18.29} \\
Sub-pooling & $-$ & $-$ & {3.00 $ \pm$ 19.10} \\
Interpolation & Mixup & uniform & 33.26 $\pm$ 21.30 \\ 
Interpolation & Mixup & $\hat a$ & 51.99  $\pm$ 34.10  \\ 
Interpolation & OTDD & uniform & 82.61 $\!\pm\!$ 25.93 \\ 
Interpolation & OTDD & $\hat a$ & \textbf{95.17$ \pm$ 20.57} \\
\bottomrule
\end{tabular}}
\end{table}

The pre-training interpolation dataset generated by our method has `optimal' mixture weights $a=(0.43,0.24, 0.33)$ for (\textsc{Caltech101}, \textsc{dtd}, \textsc{Flowers102}), suggesting a stronger similarity between the first of these and the target domain (\textsc{Pets}). This is consistent with the single-dataset transfer accuracies shown in Table \ref{tab:vtab}. However, their interpolation yields better transfer than any single dataset, particularly when using our full method (interpolating using OTDD map with optimal mixture weights). 

In Table \ref{tab:vtab}, we compute relative improvement per run, and then average these across runs; in other words, we compute the mean of ratios (MoR) rather than the ratio of means (RoM). Our reasoning for doing this was (i) controlling for the ‘hardness’ inherent to the randomly sampled subsets of \textsc{Pet} by relativizing before averaging and (ii) our observation that it is common practice to compute MoR when the denominator and numerator correspond to paired data (as is the case here), and the terms in the sum are sampled i.i.d. (again, satisfied in this case by the randomly sampled subsets of the target domain).

Table 2 shows a high deviation due to a particularly good result generated by the non-transfer learning baseline with seed 2, while other methods such as Caltech101 pretraining and Flowers102 pretraining had particularly bad results with the same seed.

\section{Conclusion and discussion}\label{sec:conclude}
The method we introduce in this work provides, as shown by our experimental results, a promising new approach to generate synthetic datasets as combinations of existing ones. Crucially, our method allows one to combine datasets even if their label sets are different, and is grounded on principled and well-understood concepts from optimal transport theory. Two key applications of this approach that we envision are: 
\begin{itemize}[leftmargin=*,noitemsep,topsep=0.5pt,parsep=0.5pt,partopsep=0.5pt]
  \item \textbf{Pretraining data enrichment}. Given a collection of classification datasets, generate additional interpolated datasets to increase diversity, with the aim of achieving better out-of-distribution generalization. This could be done even without knowledge of the specific target domain (as we do here) by selecting various datasets to play the role of the `reference' distribution.
  \item \textbf{On-demand optimized synthetic data generation}. Generate a synthetic dataset, by combining existing ones, that is `optimized' for transferring a model to a new (data-limited) target domain.
\end{itemize}

\paragraph{Complexity} 
The complexity of 
solving
OTDD \underline{barycentric projection} by Sinkhorn algorithm is $\cO(N^2 )$~\citep{dvurechensky2018computational}, where $N$ is the number of data in both datasets. This can be expensive for large-scale datasets. In practice, we solve the batched barycentric projection, i.e. take a batch from 
both
datasets and solve the projection from source 
to target batch, and we normally fix batch size $B$ as $10^4$. This reduces the complexity from $\cO(N^2 )$ to $\cO(BN)$.
The complexity of solving \underline{OTDD neural map} is $\cO(B K H)$, where $K$ is number of iterations, and $H$ is the size of the network. We always choose $K = \cO(N)$ in the experiments.
The complexity of solving all the \underline{$(2,Q)$-dataset distances} in \eqref{eq:ds_eq} is $\cO(m^2N)$ since we need to solve the dataset distance between each pair of training datasets.
\underline{Putting these pieces together}, the complexity of approximating the interpolation parameter $\hat{a}$ for the minimizer of \eqref{eq:ds_eq} is  $\cO(N(B + m^2 ))$. 

\paragraph{Memory}  As the number of pre-training tasks ($m$) increases, our method, which generates an interpolated label by concatenating labels from all tasks, creates an increasingly sparse vector. Consequently, the memory demands of the classifier's output layer, which is proportional to $m$, could rise significantly.

\paragraph{Barycentric projection vs Neural map}
These two versions of our method offer complementary advantages. While estimating the OT map allows for easy out-of-sample mapping and continuous generation, the barycentric projection approach often yields better downstream performance (Table \ref{tab:compare}). We hypothesize this is due to the barycentric projection relying on (re-weighted) \textit{real} data, while the neural map \textit{generates} data which might be noisy or imperfect.

\paragraph{Pixel space vs feature space} We present results with OTDD mapping in both pixel space (\S \ref{sec:nist}) and feature space (\S \ref{sec:vtab}). For the VTAB datasets with regular-sized images (e.g. $256\times 256\times 3$), we found that the feature space is more appropriate for measuring data distance. For small-scale images like NIST, feature space may be overkill because most foundation models are trained on images with a larger size. In our preliminary experiments with NIST datasets, we attempted a feature space approach using an off-the-shelf ResNet-18 model. However, we encountered challenges in achieving convergence when training OTDD neural maps with PyTorch ResNet-18 features.

\paragraph{High variance issue}
Our method is not limited to the data scarcity regime, but indeed this is the most interesting one from the transfer learning perspective, which is why we assume limited labeled data (but potentially much more unlabeled data) from the target domain distribution. This is a typical few-shot learning scenario.
The quality of a learned OT map will likely depend on the number of samples used to fit it, and might suffer from high variance. To mitigate this in our setting, we opt for augmenting our 
 dataset by generating additional pseudo-labeled data via kNN (Fig. \ref{fig:convex_comb}). Recall that we do have access to more unlabeled data from the target domain, which is a common situation in practice.

\paragraph{Limitations}
Our method for generating a synthetic dataset relies on solving OTDD maps from the test dataset to each training dataset.
These OTDD maps are tailored to the considered test dataset and
can not be reused for a new test dataset. Another limitation is our framework is based on model training and fine-tuning pipeline. This can be resource-demanding for large-scale models, like GPT~\citep{brown2020language} or other similar models.
\fan{Finally, if at least one of the datasets is imbalanced, our OTDD map will struggle to match the class with similar marginal distributions.}



\begin{acknowledgements} 
We thank Yongxin Chen and Nicolò Fusi for their invaluable comments, ideas, and feedback.
We extend our gratitude to the anonymous reviewers for their useful feedback that significantly improved this work.
\end{acknowledgements}

\bibliography{fan_236}

\begin{thebibliography}{50}
\providecommand{\natexlab}[1]{#1}
\providecommand{\url}[1]{\texttt{#1}}
\expandafter\ifx\csname urlstyle\endcsname\relax
  \providecommand{\doi}[1]{doi: #1}\else
  \providecommand{\doi}{doi: \begingroup \urlstyle{rm}\Url}\fi

\bibitem[Agueh and Carlier(2011)]{agueh2011barycenters}
M.~Agueh and G.~Carlier.
\newblock Barycenters in the wasserstein space.
\newblock \emph{SIAM Journal on Mathematical Analysis}, 43\penalty0
  (2):\penalty0 904--924, 2011.

\bibitem[Alvarez-Melis and Fusi(2020)]{alvarez2020geometric}
D.~Alvarez-Melis and N.~Fusi.
\newblock Geometric dataset distances via optimal transport.
\newblock \emph{Advances in Neural Information Processing Systems},
  33:\penalty0 21428--21439, 2020.

\bibitem[Alvarez-Melis and Fusi(2021)]{alvarez2021dataset}
D.~Alvarez-Melis and N.~Fusi.
\newblock Dataset dynamics via gradient flows in probability space.
\newblock In \emph{International Conference on Machine Learning}, pages
  219--230. PMLR, 2021.

\bibitem[Ambrosio et~al.(2008)Ambrosio, Gigli, and
  Savar{\'e}]{ambrosio2008gradient}
L.~Ambrosio, N.~Gigli, and G.~Savar{\'e}.
\newblock \emph{Gradient flows: in metric spaces and in the space of
  probability measures}.
\newblock Springer Science \& Business Media, 2008.

\bibitem[Asadulaev et~al.(2022)Asadulaev, Korotin, Egiazarian, and
  Burnaev]{asadulaev2022neural}
A.~Asadulaev, A.~Korotin, V.~Egiazarian, and E.~Burnaev.
\newblock Neural optimal transport with general cost functionals.
\newblock \emph{arXiv preprint arXiv:2205.15403}, 2022.

\bibitem[Bowles et~al.(2018)Bowles, Chen, Guerrero, Bentley, Gunn, Hammers,
  Dickie, Hernández, Wardlaw, and Rueckert]{bowles2018GAN}
C.~Bowles, L.~Chen, R.~Guerrero, P.~Bentley, R.~Gunn, A.~Hammers, D.~A. Dickie,
  M.~V. Hernández, J.~Wardlaw, and D.~Rueckert.
\newblock {GAN} augmentation: Augmenting training data using generative
  adversarial networks, Oct. 2018.

\bibitem[Brenier(1991)]{brenier1991polar}
Y.~Brenier.
\newblock Polar factorization and monotone rearrangement of vector-valued
  functions.
\newblock \emph{Communications on pure and applied mathematics}, 44\penalty0
  (4):\penalty0 375--417, 1991.

\bibitem[Brown et~al.(2020)Brown, Mann, Ryder, Subbiah, Kaplan, Dhariwal,
  Neelakantan, Shyam, Sastry, Askell, Agarwal, Herbert-Voss, Krueger, Henighan,
  Child, Ramesh, Ziegler, Wu, Winter, Hesse, Chen, Sigler, Litwin, Gray, Chess,
  Clark, Berner, McCandlish, Radford, Sutskever, and Amodei]{brown2020language}
T.~Brown, B.~Mann, N.~Ryder, M.~Subbiah, J.~D. Kaplan, P.~Dhariwal,
  A.~Neelakantan, P.~Shyam, G.~Sastry, A.~Askell, S.~Agarwal, A.~Herbert-Voss,
  G.~Krueger, T.~Henighan, R.~Child, A.~Ramesh, D.~Ziegler, J.~Wu, C.~Winter,
  C.~Hesse, M.~Chen, E.~Sigler, M.~Litwin, S.~Gray, B.~Chess, J.~Clark,
  C.~Berner, S.~McCandlish, A.~Radford, I.~Sutskever, and D.~Amodei.
\newblock Language models are {Few-Shot} learners.
\newblock In H.~Larochelle, M.~Ranzato, R.~Hadsell, M.~F. Balcan, and H.~Lin,
  editors, \emph{Advances in Neural Information Processing Systems}, volume~33,
  pages 1877--1901. Curran Associates, Inc., 2020.

\bibitem[Bunne et~al.(2022{\natexlab{a}})Bunne, Krause, and
  Cuturi]{bunne2022supervised}
C.~Bunne, A.~Krause, and M.~Cuturi.
\newblock Supervised training of conditional monge maps.
\newblock \emph{arXiv preprint arXiv:2206.14262}, 2022{\natexlab{a}}.

\bibitem[Bunne et~al.(2022{\natexlab{b}})Bunne, Papaxanthos, Krause, and
  Cuturi]{bunne2022proximal}
C.~Bunne, L.~Papaxanthos, A.~Krause, and M.~Cuturi.
\newblock Proximal optimal transport modeling of population dynamics.
\newblock In G.~Camps-Valls, F.~J.~R. Ruiz, and I.~Valera, editors,
  \emph{Proceedings of The 25th International Conference on Artificial
  Intelligence and Statistics}, volume 151 of \emph{Proceedings of Machine
  Learning Research}, pages 6511--6528. PMLR, 2022{\natexlab{b}}.

\bibitem[Chuang and Mroueh(2021)]{chuang2021fair}
C.-Y. Chuang and Y.~Mroueh.
\newblock Fair mixup: Fairness via interpolation.
\newblock In \emph{International Conference on Learning Representations}, 2021.

\bibitem[Cohen et~al.(2017)Cohen, Afshar, Tapson, and
  Van~Schaik]{cohen2017emnist}
G.~Cohen, S.~Afshar, J.~Tapson, and A.~Van~Schaik.
\newblock Emnist: Extending mnist to handwritten letters.
\newblock In \emph{2017 international joint conference on neural networks
  (IJCNN)}, pages 2921--2926. IEEE, 2017.

\bibitem[Craig(2016)]{craig2016exponential}
K.~Craig.
\newblock The exponential formula for the wasserstein metric.
\newblock \emph{ESAIM: Control, Optimisation and Calculus of Variations},
  22\penalty0 (1):\penalty0 169--187, 2016.

\bibitem[Dvurechensky et~al.(2018)Dvurechensky, Gasnikov, and
  Kroshnin]{dvurechensky2018computational}
P.~Dvurechensky, A.~Gasnikov, and A.~Kroshnin.
\newblock Computational optimal transport: Complexity by accelerated gradient
  descent is better than by sinkhorn’s algorithm.
\newblock In \emph{International conference on machine learning}, pages
  1367--1376. PMLR, 2018.

\bibitem[Fan et~al.(2020)Fan, Taghvaei, and Chen]{fan2020scalable}
J.~Fan, A.~Taghvaei, and Y.~Chen.
\newblock Scalable computations of wasserstein barycenter via input convex
  neural networks.
\newblock \emph{arXiv preprint arXiv:2007.04462}, 2020.

\bibitem[Fan et~al.(2021)Fan, Liu, Ma, Zhou, and Chen]{fan2021scalable}
J.~Fan, S.~Liu, S.~Ma, H.~Zhou, and Y.~Chen.
\newblock Neural monge map estimation and its applications.
\newblock \emph{arXiv preprint arXiv:2106.03812}, 2021.

\bibitem[Fan et~al.(2022)Fan, Zhang, Taghvaei, and Chen]{fan2022variational}
J.~Fan, Q.~Zhang, A.~Taghvaei, and Y.~Chen.
\newblock Variational {W}asserstein gradient flow.
\newblock In K.~Chaudhuri, S.~Jegelka, L.~Song, C.~Szepesvari, G.~Niu, and
  S.~Sabato, editors, \emph{Proceedings of the 39th International Conference on
  Machine Learning}, volume 162 of \emph{Proceedings of Machine Learning
  Research}, pages 6185--6215. PMLR, 2022.

\bibitem[Gao and Chaudhari(2021)]{gao2021information}
Y.~Gao and P.~Chaudhari.
\newblock An {Information-Geometric} distance on the space of tasks.
\newblock In \emph{Proceedings of the 38th International Conference on Machine
  Learning}. PMLR, 2021.

\bibitem[He et~al.(2022)He, Chen, Xie, Li, Doll{\'a}r, and
  Girshick]{he2022masked}
K.~He, X.~Chen, S.~Xie, Y.~Li, P.~Doll{\'a}r, and R.~Girshick.
\newblock Masked autoencoders are scalable vision learners.
\newblock In \emph{Proceedings of the IEEE/CVF Conference on Computer Vision
  and Pattern Recognition}, pages 16000--16009, 2022.

\bibitem[Hua et~al.(2023)Hua, Nguyen, Le, Blanchet, and Nguyen]{hua2023dynamic}
X.~Hua, T.~Nguyen, T.~Le, J.~Blanchet, and V.~A. Nguyen.
\newblock Dynamic flows on curved space generated by labeled data.
\newblock \emph{arXiv preprint arXiv:2302.00061}, 2023.

\bibitem[Hull(1994)]{hull1994database}
J.~J. Hull.
\newblock A database for handwritten text recognition research.
\newblock \emph{IEEE Transactions on pattern analysis and machine
  intelligence}, 16\penalty0 (5):\penalty0 550--554, 1994.

\bibitem[Jain et~al.(2022)Jain, Salman, Khaddaj, Wong, Park, and
  Madry]{jain2022data}
S.~Jain, H.~Salman, A.~Khaddaj, E.~Wong, S.~M. Park, and A.~Madry.
\newblock A data-based perspective on transfer learning.
\newblock \emph{arXiv preprint arXiv:2207.05739}, 2022.

\bibitem[Kabir et~al.(2022)Kabir, Abdar, Khosravi, Jalali, Atiya, Nahavandi,
  and Srinivasan]{kabir2022spinalnet}
H.~D. Kabir, M.~Abdar, A.~Khosravi, S.~M.~J. Jalali, A.~F. Atiya, S.~Nahavandi,
  and D.~Srinivasan.
\newblock Spinalnet: Deep neural network with gradual input.
\newblock \emph{IEEE Transactions on Artificial Intelligence}, 2022.

\bibitem[Kirkpatrick et~al.(2017)Kirkpatrick, Pascanu, Rabinowitz, Veness,
  Desjardins, Rusu, Milan, Quan, Ramalho, Grabska-Barwinska, Hassabis, Clopath,
  Kumaran, and Hadsell]{kirkpatrick2017overcoming}
J.~Kirkpatrick, R.~Pascanu, N.~Rabinowitz, J.~Veness, G.~Desjardins, A.~A.
  Rusu, K.~Milan, J.~Quan, T.~Ramalho, A.~Grabska-Barwinska, D.~Hassabis,
  C.~Clopath, D.~Kumaran, and R.~Hadsell.
\newblock Overcoming catastrophic forgetting in neural networks.
\newblock \emph{PNAS}, 114\penalty0 (13):\penalty0 3521--3526, mar 2017.

\bibitem[Korotin et~al.(2021)Korotin, Li, Solomon, and
  Burnaev]{korotin2021continuous}
A.~Korotin, L.~Li, J.~Solomon, and E.~Burnaev.
\newblock Continuous wasserstein-2 barycenter estimation without minimax
  optimization.
\newblock \emph{arXiv preprint arXiv:2102.01752}, 2021.

\bibitem[Korotin et~al.(2022{\natexlab{a}})Korotin, Egiazarian, Li, and
  Burnaev]{korotin2022wasserstein}
A.~Korotin, V.~Egiazarian, L.~Li, and E.~Burnaev.
\newblock Wasserstein iterative networks for barycenter estimation.
\newblock \emph{arXiv preprint arXiv:2201.12245}, 2022{\natexlab{a}}.

\bibitem[Korotin et~al.(2022{\natexlab{b}})Korotin, Selikhanovych, and
  Burnaev]{korotin2022neural}
A.~Korotin, D.~Selikhanovych, and E.~Burnaev.
\newblock Neural optimal transport.
\newblock \emph{arXiv preprint arXiv:2201.12220}, 2022{\natexlab{b}}.

\bibitem[Liu et~al.(2019)Liu, Gu, and Samaras]{liu2019wasserstein}
H.~Liu, X.~Gu, and D.~Samaras.
\newblock Wasserstein gan with quadratic transport cost.
\newblock In \emph{Proceedings of the IEEE/CVF international conference on
  computer vision}, pages 4832--4841, 2019.

\bibitem[Makkuva et~al.(2020)Makkuva, Taghvaei, Oh, and
  Lee]{makkuva2020optimal}
A.~Makkuva, A.~Taghvaei, S.~Oh, and J.~Lee.
\newblock Optimal transport mapping via input convex neural networks.
\newblock In \emph{International Conference on Machine Learning}, volume~37,
  2020.

\bibitem[McCann(1995)]{mccann1995existence}
R.~J. McCann.
\newblock Existence and uniqueness of monotone measure-preserving maps.
\newblock \emph{Duke Mathematical Journal}, 80\penalty0 (2):\penalty0 309--323,
  1995.

\bibitem[McCann(1997)]{mccann1997convexity}
R.~J. McCann.
\newblock A convexity principle for interacting gases.
\newblock \emph{Advances in mathematics}, 128\penalty0 (1):\penalty0 153--179,
  1997.

\bibitem[McCloskey and Cohen(1989)]{mccloskey1989catastrophic}
M.~McCloskey and N.~J. Cohen.
\newblock Catastrophic interference in connectionist networks: The sequential
  learning problem.
\newblock In G.~H. Bower, editor, \emph{Psychology of Learning and Motivation},
  volume~24, pages 109--165. Academic Press, Jan. 1989.
\newblock \doi{10.1016/S0079-7421(08)60536-8}.

\bibitem[Mokrov et~al.(2021)Mokrov, Korotin, Li, Genevay, Solomon, and
  Burnaev]{mokrov2021large-scale}
P.~Mokrov, A.~Korotin, L.~Li, A.~Genevay, J.~M. Solomon, and E.~Burnaev.
\newblock {Large-Scale} wasserstein gradient flows.
\newblock In M.~Ranzato, A.~Beygelzimer, Y.~Dauphin, P.~S. Liang, and J.~W.
  Vaughan, editors, \emph{Advances in Neural Information Processing Systems},
  volume~34, pages 15243--15256. Curran Associates, Inc., 2021.

\bibitem[Perrot et~al.(2016)Perrot, Courty, Flamary, and
  Habrard]{perrot2016mapping}
M.~Perrot, N.~Courty, R.~Flamary, and A.~Habrard.
\newblock Mapping estimation for discrete optimal transport.
\newblock \emph{Advances in Neural Information Processing Systems}, 29, 2016.

\bibitem[Pooladian and Niles-Weed(2021)]{pooladian2021entropic}
A.-A. Pooladian and J.~Niles-Weed.
\newblock Entropic estimation of optimal transport maps.
\newblock \emph{arXiv preprint arXiv:2109.12004}, 2021.

\bibitem[Ronneberger et~al.(2015)Ronneberger, Fischer, and
  Brox]{ronneberger2015u}
O.~Ronneberger, P.~Fischer, and T.~Brox.
\newblock U-net: Convolutional networks for biomedical image segmentation.
\newblock In \emph{International Conference on Medical image computing and
  computer-assisted intervention}, pages 234--241. Springer, 2015.

\bibitem[Rout et~al.(2022)Rout, Korotin, and Burnaev]{rout2022generative}
L.~Rout, A.~Korotin, and E.~Burnaev.
\newblock Generative modeling with optimal transport maps.
\newblock In \emph{International Conference on Learning Representations}, 2022.

\bibitem[Sandfort et~al.(2019)Sandfort, Yan, Pickhardt, and
  Summers]{sandfort2019data}
V.~Sandfort, K.~Yan, P.~J. Pickhardt, and R.~M. Summers.
\newblock Data augmentation using generative adversarial networks ({CycleGAN})
  to improve generalizability in {CT} segmentation tasks.
\newblock \emph{Sci. Rep.}, 9\penalty0 (1):\penalty0 16884, Nov. 2019.
\newblock ISSN 2045-2322.
\newblock \doi{10.1038/s41598-019-52737-x}.

\bibitem[Santambrogio(2015)]{santambrogio2015optimal}
F.~Santambrogio.
\newblock Optimal transport for applied mathematicians.
\newblock \emph{Birk{\"a}user, NY}, 55\penalty0 (58-63):\penalty0 94, 2015.

\bibitem[Santambrogio(2017)]{santambrogio2017euclidean}
F.~Santambrogio.
\newblock $\{$Euclidean, metric, and Wasserstein$\}$ gradient flows: an
  overview.
\newblock \emph{Bulletin of Mathematical Sciences}, 7\penalty0 (1):\penalty0
  87--154, 2017.

\bibitem[Sinkhorn(1967)]{sinkhorn1967diagonal}
R.~Sinkhorn.
\newblock Diagonal equivalence to matrices with prescribed row and column sums.
\newblock \emph{The American Mathematical Monthly}, 74\penalty0 (4):\penalty0
  402--405, 1967.

\bibitem[Srivastava et~al.(2018)Srivastava, Li, and
  Dunson]{srivastava2018scalable}
S.~Srivastava, C.~Li, and D.~B. Dunson.
\newblock Scalable bayes via barycenter in wasserstein space.
\newblock \emph{The Journal of Machine Learning Research}, 19\penalty0
  (1):\penalty0 312--346, 2018.

\bibitem[Villani(2008)]{villani2008optimal}
C.~Villani.
\newblock \emph{Optimal transport, Old and New}, volume 338.
\newblock Springer Science \& Business Media, 2008.
\newblock ISBN 9783540710493.

\bibitem[Yao et~al.(2021)Yao, Zhang, and Finn]{yao2021meta}
H.~Yao, L.~Zhang, and C.~Finn.
\newblock Meta-learning with fewer tasks through task interpolation.
\newblock \emph{arXiv preprint arXiv:2106.02695}, 2021.

\bibitem[Yeaton et~al.(2022)Yeaton, Krishnan, Mieloszyk, Alvarez-Melis, and
  Huynh]{yeaton2022hierarchical}
A.~Yeaton, R.~G. Krishnan, R.~Mieloszyk, D.~Alvarez-Melis, and G.~Huynh.
\newblock Hierarchical optimal transport for comparing histopathology datasets.
\newblock \emph{arXiv preprint arXiv:2204.08324}, 2022.

\bibitem[Yoon et~al.(2019)Yoon, Jordon, and van~der Schaar]{yoon2019pate-gan}
J.~Yoon, J.~Jordon, and M.~van~der Schaar.
\newblock {{PATE}-{GAN}}: Generating synthetic data with differential privacy
  guarantees.
\newblock In \emph{International Conference on Learning Representations}, 2019.

\bibitem[Zhai et~al.(2019)Zhai, Puigcerver, Kolesnikov, Ruyssen, Riquelme,
  Lucic, Djolonga, Pinto, Neumann, Dosovitskiy, et~al.]{zhai2019large}
X.~Zhai, J.~Puigcerver, A.~Kolesnikov, P.~Ruyssen, C.~Riquelme, M.~Lucic,
  J.~Djolonga, A.~S. Pinto, M.~Neumann, A.~Dosovitskiy, et~al.
\newblock A large-scale study of representation learning with the visual task
  adaptation benchmark.
\newblock \emph{arXiv preprint arXiv:1910.04867}, 2019.

\bibitem[Zhang et~al.(2018)Zhang, Cisse, Dauphin, and
  Lopez-Paz]{zhang2018mixup}
H.~Zhang, M.~Cisse, Y.~N. Dauphin, and D.~Lopez-Paz.
\newblock mixup: Beyond empirical risk minimization.
\newblock In \emph{International Conference on Learning Representations}, 2018.

\bibitem[Zhang et~al.(2021)Zhang, Deng, Kawaguchi, Ghorbani, and
  Zou]{zhang2021how}
L.~Zhang, Z.~Deng, K.~Kawaguchi, A.~Ghorbani, and J.~Zou.
\newblock How does mixup help with robustness and generalization?
\newblock In \emph{International Conference on Learning Representations}, 2021.

\bibitem[Zhu et~al.(2023)Zhu, Qiu, Guha, Yang, Nguyen, Li, and
  Zhao]{zhu2023interpolation}
J.~Zhu, J.~Qiu, A.~Guha, Z.~Yang, X.~Nguyen, B.~Li, and D.~Zhao.
\newblock Interpolation for robust learning: Data augmentation on geodesics.
\newblock \emph{arXiv preprint arXiv:2302.02092}, 2023.

\end{thebibliography}

\newpage
\appendix
\onecolumn

\section{Proofs}\label{sec:proof}
\begin{proof}[Proof of Lemma \ref{lem:convex_w2}]
  By \citet[\S4.4]{santambrogio2017euclidean}, the result holds when $m=2$. Then Proposition 7.5 in \citet{agueh2011barycenters} extends the result to the case of $m>2$.
\end{proof}

\begin{proof}[Proof of Proposition \ref{prop:eq}]
  Since linear combination preserves cyclically monotonicity, $\sum_{i=1}^m a_i T_i^*(x)$ is the optimal map from $\nu$ to $\rho_a^G$~\citep{mccann1995existence}.
  Then according to the definition of $W_{2, \nu }(\cdot , \cdot ) $, we can write
  \begin{align}\label{eq:pf}
    W_{2, \nu }^2(\rho^G_a, \nu )
    =
    \int \left \|x - \sum_{i=1}^m a_i T_i^*(x) \right\|^2 \d \nu (x).
  \end{align}
  For scalars $p , q_1,\ldots, q_m$, it holds that
  \begin{align}
    \left(p - \sum_{i=1}^m a_i q_i \right)^2 & = p^2 + \sum_{i=1}^m a_i^2 q_i^2 - 2  \sum_{i=1}^m a_i p q_i  + \sum_{i\ne j} a_i a_j q_i q_j                           \\
                                             & = p^2 + \sum_{i=1}^m (a_i - a_i \sum_{j \ne i} a_j ) q_i^2 - 2  \sum_{i=1}^m a_i p q_i  + \sum_{i\ne j} a_i a_j q_i q_j \\
                                             & =  \sum_{i=1}^m a_i (p-q_i)^2 - \frac{1}{2} \sum_{i \ne j} a_i a_j (q_i - q_j)^2.
  \end{align}
  Plugging this equality into \eqref{eq:pf} gives
  \begin{align}
    W_{2, \nu }^2(\rho^G_a, \nu )
     & =
    \int \left( \sum_{i=1}^m a_i   \|x -  T_i^*(x)\|^2 - \frac{1}{2} \sum_{i \ne j} a_i a_j \| T_i^*(x) -  T_j^*(x)\|^2 \right) \d \nu (x)         \\
     & =   \sum_{i=1}^m a_i \int  \|x -  T_i^*(x)\|^2 \d \nu (x) - \frac{1}{2} \sum_{i \ne j} a_i a_j \int \| T_i^*(x) -  T_j^*(x)\|^2  \d \nu (x) \\
     & = \sum_{i=1}^m a_i W_{2,\nu }^2(\mu_i, \nu ) - \frac{1}{2} \sum_{i \neq j} a_i a_j W_{2,\nu }^2(\mu_i, \mu_j ).
  \end{align}
\end{proof}
\begin{proof}[Proof of Proposition \ref{prop:metric}]
  Firstly, $\cW_{2,Q}$ is symmetric and nonnegative by definition. It is non-degenerate since $\cW_{2,Q} (P_i,P_j) \ge d_\OT (P_i, P_j) $ and $d_\OT$ is a metric. Finally, we show it satisfies the triangular inequality. Indeed,
  \begin{align}
     & ~~~~~~\cW_{2,Q}(P_1, P_3)                                                                      \\
     & =\left( \int  \| x_1 -  x_3 \|^2 +
    W_2^2(\alpha_{y_1}, \alpha_{y_3})
    \d Q(z) \right)^{1/2}                                                                             \\
     & \le \left( \int  (\| x_1 -  x_2 \| + \| x_2 -  x_3 \| )^2 +
    (W_2(\alpha_{y_1}, \alpha_{y_2})
    + W_2(\alpha_{y_2}, \alpha_{y_3})
    )^2
    \d Q(z) \right)^{1/2}                                                                             \\
     & \le \left( \int  \| x_1 -  x_2 \|^2 + W^2_2(\alpha_{y_1}, \alpha_{y_2}) \d Q(z)  \right)^{1/2}
    + \left( \int \| x_2 -  x_3 \|^2 + W^2_2(\alpha_{y_2}, \alpha_{y_3})
    \d Q(z) \right)^{1/2}                                                                             \\
     & = \cW_{2,Q}(P_1, P_2) +  \cW_{2,Q}(P_2, P_3),
  \end{align}
  where the first inequality is the triangular inequality and the second inequality is the Minkowski inequality.
\end{proof}
\section{Implementation details of OTDD map}\label{sec:neural_map}

\paragraph{OTDD barycentric projection}
We use the implementation \url{https://github.com/microsoft/otdd} to solve OTDD coupling. The rest part is straightforward.
\paragraph{OTDD neural map}
To solve the problem \eqref{eq:max-min}, we parameterize $f, G, \ell$ to be three neural networks. In NIST dataset experiments, we parameterize $f$ as ResNet~\footnote{\url{https://github.com/harryliew/WGAN-QC}} from WGAN-QC~\citep{liu2019wasserstein}, and take feature map $G$ to be UNet\footnote{\url{https://github.com/milesial/Pytorch-UNet}}~\citep{ronneberger2015u}. We generate the labels $\bar y$ with a pre-trained classifier $\ell(\cdot)$, and use a LeNet or VGG-5 with Spinal layers\footnote{\url{https://github.com/dipuk0506/SpinalNet}} \citep{kabir2022spinalnet} to parameterize $\ell(\cdot)$.  In 2D Gaussian mixture experiments, we use Residual MLP to represent all of them.

We remove the discriminator's condition on label to simplify the loss function as
\begin{align}
  \sup_f \inf_G  \int \bigl( \underbrace{\| x- G( z)\|_2^2}_\text{feature loss}   + \underbrace{W_2^2(\alpha_{y}, \alpha_{\bar y} )}_\text{label loss} \bigr) \d Q(z) \underbrace{- \int f(\bar x ) \d Q(z)
    + \int f(x') \d P(z') }_\text{discriminator loss}.
\end{align}
In this formula, we assume both $y$ and $\bar y$ are hard labels, but in practice, the output of $\ell(\cdot )$ is a soft label. Simply taking the \texttt{argmax} to get a hard label can
break the computational graph,
so we replace the label loss $W_2^2(\alpha_{y}, \alpha_{\bar y} )$ by $y^\top M \bar{y}$, where $y$ is the one-hot label from dataset $Q$.
And $M \in \mR_{\ge 0}^{C_{Q} \times C_{P}}$ is the label-to-label matrix where $M(i,j) := W_2^2(\alpha_{y_i}, \alpha_{y_j}).$
 The matrix $M$ is precomputed before the training, and is frozen during the training.

We pre-train the feature map $G$ to be an identity map before the main adversarial training. We use the Exponential Moving Average\footnote{\url{https://github.com/fadel/pytorch_ema}} of the trained feature maps as the final feature map.

\paragraph{Data processing} For all the *NIST datasets, we rescale the images to size $32\times 32$, and repeat their channel 3 times and obtain 3-channel images. We use the default train-test split from \texttt{torchvision}.
For the VTAB datasets, we use a masked auto-encoder with 
196 batches and 1024 embed dimension based on ViT-Large. So the final embedding dimension is $197 \times 1024 = 201728$.  We also use the default train-test split from \texttt{torchvision}.

\paragraph{Hyperparameters} For the experimental results in \S\ref{sec:nist}, we use the OTDD neural map and train them using Adam optimizer with learning rate $10^{-3}$ and batch size 64. We train a LeNet for 2000 iterations, and fine-tune for 100 epochs. Regarding the comparison with other baselines in \S\ref{sec:nist}, for transfer learning methods, we train a SpinalNet for $10^4$ iterations, and fine-tune it for $2000$ iterations on the test dataset. Training from scratch on the test dataset takes also 2000 iterations. For the results in \S\ref{sec:vtab}, we pre-train the ResNet-18 model for 5 epochs, then fine-tune the model on the few-shot dataset for 10 epochs. During fine-tuning, we still let the whole network tunable. The batch size is 128, and the learning rate is $10^{-3}$.

\section{Discussions over complexity-accuracy trade-off}

We agree that our method is more computationally demanding than Mixup in general.
Specifically, we consider Mixup and our methods to occupy different points of a compute-accuracy trade-off characterized by the expressivity of the geodesics between datasets they define. That being said, the trade-off is nevertheless not a prohibitive one, as shown by the fact that we can scale our method to VTAB-sized datasets with a very standard GPU setup.

‘Vanilla’ mixup with uniform dataset weights is indeed quite cheap (but, as shown in Table \ref{tab:vtab}, considerably worse than alternatives). On the other hand, the version of Mixup that uses the ‘optimal’ mixture weights (labeled Mixup - optimal in Table \ref{tab:vtab}, and the only Mixup version in Table 1) requires solving Eq. \eqref{eq:ds_eq}, which involves non-trivial computing to obtain OTDD maps. In the context of the trade-off spectrum described above, Mixup with optimal weights is strictly in between vanilla Mixup and OTDD interpolation.

\section{Additional results}\label{sec:nist_maps}

\begin{figure}[h]
  \centering
  \begin{subfigure}{0.35\textwidth}
    \includegraphics[width=1\linewidth]{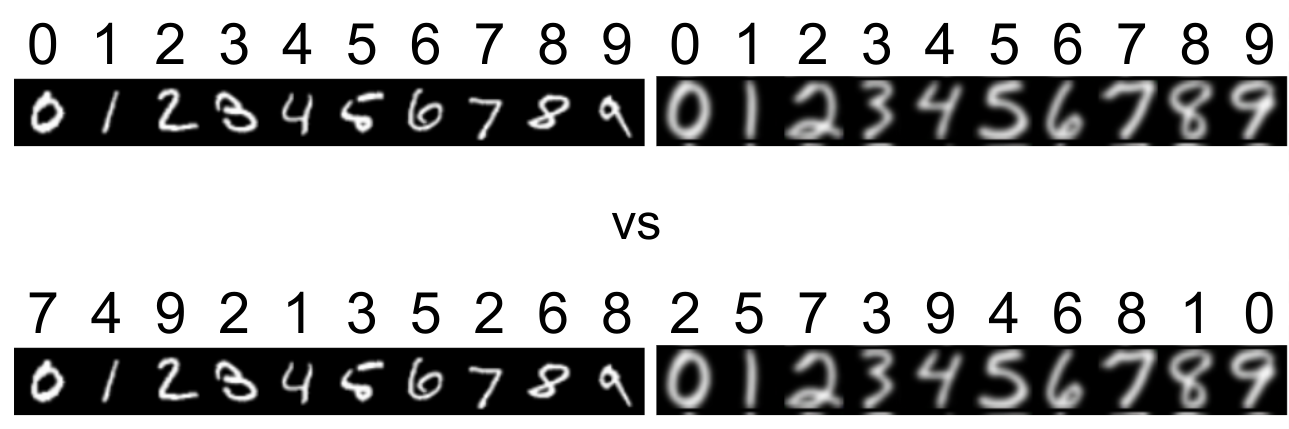}
  \end{subfigure}
  \caption{The numbers above images are the labels. In the first labelling method, all 0 MNIST digits are assigned as class "0", and they are labelled as class "7" in the bottom labelling.}
  \label{fig:labelling}
\end{figure}
\subsection{OTDD neural map visualization}

We show the OTDD neural map between  2D Gaussian mixture models with 16 components in Figure \ref{fig:chess}. This example is very special so that we have the closed-form solution of OTDD map. The feature map is a identity map and the pushforward label is equal to the corresponding class that has the same conditional distribution $p(x|y)$ as source label. For example, the sample from top left corner cluster is still mapped to the top left corner cluster, and the label is changed from blue to orange. This map achieves zero transport cost. Since the transport cost is always non-negative, this map is the optimal OTDD map.
However, \cite{asadulaev2022neural,bunne2022supervised} enforce mapping to preserve the labels, so with their methods, the blue cluster would still map to the blue cluster. Thus their feature map is highly non-convex and more difficult to learn. We refer to Figure 5 in \citet{asadulaev2022neural} for their performance on the same example. Compared with them, our pushforward dataset aligns with the target dataset better.

\begin{figure}[h!]
  \centering
  \begin{subfigure}{1\textwidth}
    \centering
    \includegraphics[width=0.85\linewidth]{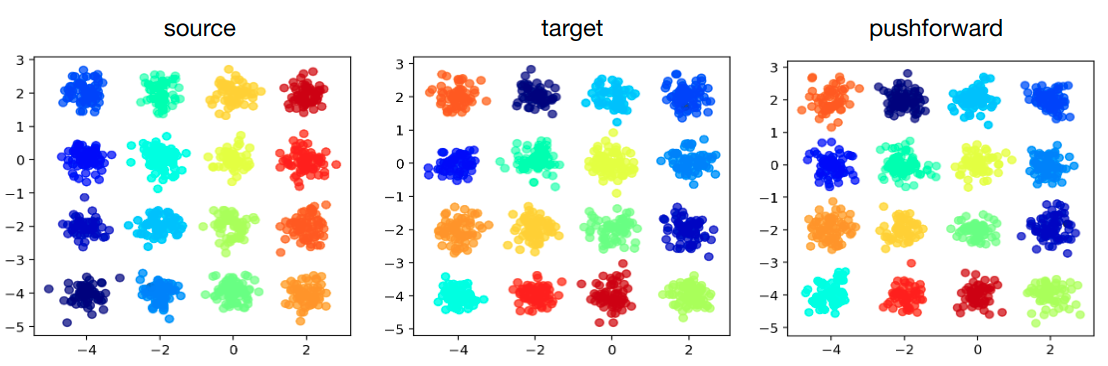}
  \end{subfigure}
  \caption{OTDD neural map for 2D Gaussian mixture distributions.}
  \label{fig:chess}
\end{figure}

\subsection{McCann's interpolation between datasets}

Our OTDD map can be extended to generate  McCann's interpolation between datasets. We propose an anolog of McCann's interpolation \eqref{eq:mccan} in the dataset space. We define McCann's interpolation between datasets $P_0$ and $P_1$ as
\begin{align}
  P^M_t: = ((1-t) {\rm Id} + t \cT^* )\sharp P_0 , \quad t \in [0,1],
\end{align}
where $\cT^*$ is the optimal OTDD map from $P_0$ to $P_1$ and $t$ is the interpolation parameter. The superscript $M$ of $P_t^M$ means McCann. We use the same convex combination method in \S\ref{sec:comb} to obtain samples from $P^M_t$.
Assume $(x_0,y_0) \sim P_0,~ (x_1,y_1) = \cT^*(x_0,y_0)$ and $P_0, P_1$ contain 7, 3 classes respectively, i.e. $y_0 \in \{0,1\}^7, y_1 \in \{0,1\}^3$. Then the combination of features is $x_t = (1-t) x_0 + t x_1 $, and the combination of labels is
\begin{align}
  y_t = (1-t)
  \begin{bmatrix}
    y_0 \\ \mathbf{0}_{3}
  \end{bmatrix}
  + t
  \begin{bmatrix}
    \mathbf{0}_7 \\ y_1
  \end{bmatrix}.
\end{align}
Thus $(x_t, y_t)$ is a sample from $((1-t) {\rm Id} + t \cT^* )\sharp P_0 $. We visualize McCann's interpolation between two Gaussian mixture distributions in Figure \ref{fig:mccan_2d}. This method can map the labeled data from one dataset to another, and do the interpolation between them. Thus we can use it to map abundant data from an external dataset, to a scarce dataset for data augmentation. For example, in Figure \ref{fig:mccan_2d_imb}, the target dataset only has 30 samples, but the source dataset has 60000 samples. We learn the OTDD neural map between them and solve their interpolation. We find that $P_1^M$ creates new data out of the domain of the original target distribution, which Mixup~\citep{zhang2018mixup} can not achieve. Thus, the data from $P_t^M$ for $t$ close to 1.0 can enrich the target dataset, and be potentially used in data augmentation for classification tasks.

\begin{figure}[h!]
  \centering
  \begin{subfigure}{1\textwidth}
    \centering
    \includegraphics[width=1\linewidth]{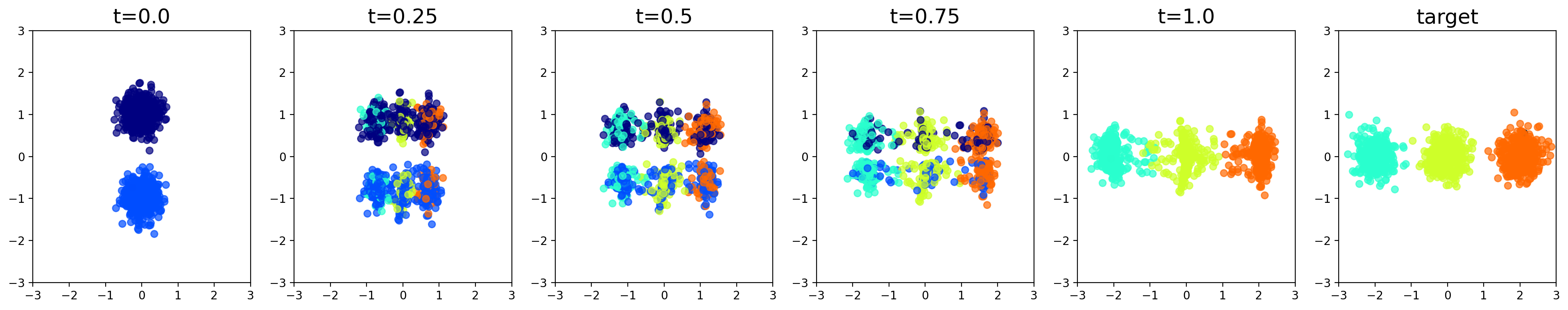}
  \end{subfigure}
  \caption{McCann's interpolation for 2D labelled datasets. Each color represents a class. When $t \rightarrow 1.0$, the samples within blue classes become less and less, and finally disappear when $t=1.0$.}
  \label{fig:mccan_2d}
\end{figure}

\begin{figure}[ht!]
  \centering
  \begin{subfigure}{1\textwidth}
    \centering
    \includegraphics[width=1\linewidth]{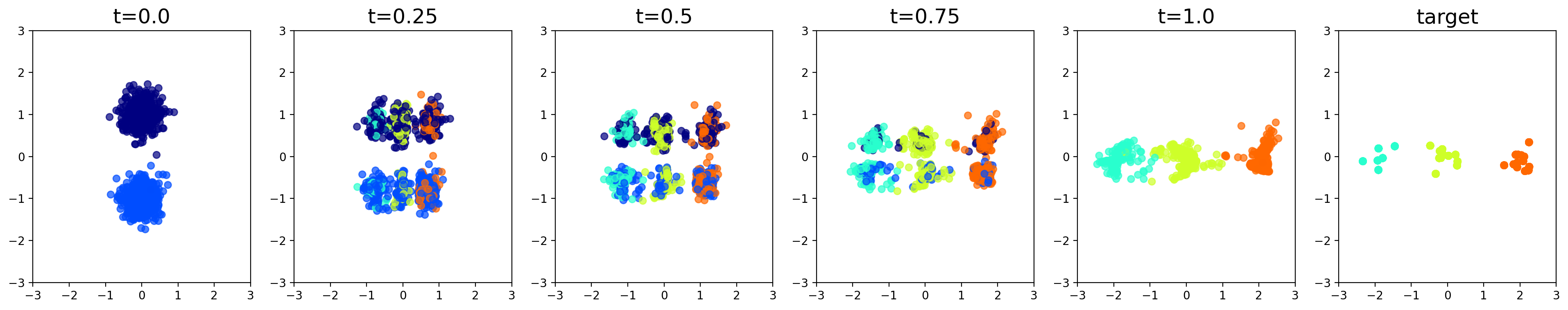}
  \end{subfigure}
  \caption{Data augmentation by mapping  an external dataset to a few-shot dataset.
  }
  \label{fig:mccan_2d_imb}
\end{figure}

\subsection{Correlation study of *NIST experiments}
 A more concrete visualization of the correlation  between $\mathcal{W}^2(P_a, Q)$ and *NIST transfer learning test accuracy is shown in Figure \ref{fig:corr}. Among all datasets, USPS and KMNIST lack correlation. 
 We believe it’s caused by (i) small variance in the distances from pretraining dataset to target dataset, implying a limited relative diversity of datasets on which to draw on and (ii) (in the case of USPS) a very simple task where baseline accuracy is already very high and hard to improve upon via transfer. 
 
\begin{figure*}[ht!]
  \centering
  \begin{subfigure}{1\textwidth}
    \centering
    \includegraphics[width=1\linewidth]{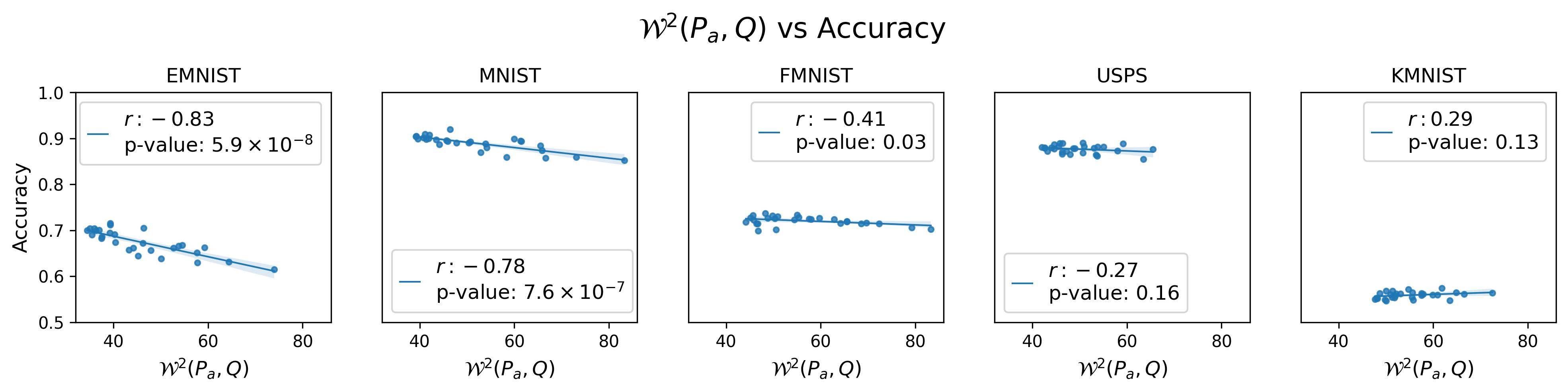}
  \end{subfigure}
  \caption{Pearson correlation between the (averaged) function $\cW^2(P_a, Q)$ and the test accuracy of the fine-tuned model.
    Most datasets present a negative correlation between $\cW^2(P_a, Q)$ and the accuracy.
    When test dataset is USPS or KMNIST (rightmost two), 
    all three training datasets are similarly distant to the test dataset;
    thus, the range of $\cW^2(P_a, Q)$ is not wide enough to show an obvious negative correlation.
    This explains the nearly zero slope and relatively large $p$-value for those two datasets.
    Similar pattern has been observed in \citet[Figure 5(a)]{yeaton2022hierarchical}.
  }
  \label{fig:corr}
\end{figure*}

\subsection{Fine-grained analysis over $\cW^2 (P_a,W)$ in *NIST experiments}

In Table \ref{tab:w_stat}, we provide a more fine-grained analysis for different aspects of $\mathcal{W}(P_a,Q)$ and their effect on transfer accuracy. To do so, we provide the min, median, range, and standard deviation of $\mathcal{W}(P_a,Q)$ in the table below. In addition, as a proxy for the hardness / best possible gain from transfer learning, we show in the last column \textit{OTDD accuracy} minus \textit{few shot accuracy}, where \textit{OTDD accuracy} and \textit{few shot accuracy} are the mean accuracies in Rows 1 and 4, respectively, in Table \ref{tab:compare}. 

Based on these statistics, we make the following observations on the relation between $\mathcal{W}(P_a,Q)$ and transfer accuracy:

\begin{itemize}
\item The accuracy improvement is strongly driven by $\min_a \mathcal{W}(P_a,Q)$. EMNIST and MNIST are with relatively smaller $\min_a \mathcal{W}(P_a,Q)$ and share the largest improvement margin. On the other hand, FMNIST and KMNIST as $Q$ have the largest $\mathcal{W}(P_a,Q)$ to the other pre-training datasets, and have relatively smaller accuracy gain. In other words, the correlation between distance and accuracy is stronger in the part of the convex dataset polytope that is closest to the target dataset.
\item The strength of the correlation between $\mathcal{W}(P_a,Q)$ and accuracy seems to depend on the \textbf{range} and \textbf{standard deviation} of he former. On the one hand, \textbf{settings with low dynamic range in $\mathcal{W}(P_a,Q)$ (like USPS and EMNIST) make it harder to observe meaningful differences in accuracy}. On the other hand, this indicates that those datasets are roughly (or at least \textbf{more}) equidistant from all pretraining datasets, and therefore any convex combination of them will also be close to equidistant from the target, yielding no visible improvement.
\item Intrinsic task hardness matters. Consider USPS: all pretraining datasets, regardless of distance, seem to yield very similar accuracy on it, and it has the lowest accuracy gain (only $\sim$5\%) among 5 tasks. But considering that the no-transfer (i.e. 5-shot) accuracy is already almost 81\%, it is clear that the benefit from transfer learning is “a priori” limited, and therefore all pretraining datasets yield a similar minor improvement.
\end{itemize}

\begin{table}[h]
\centering
\caption{Statistics of $\cW(P_a,Q)$
 and transfer accuracy in *NIST experiments (\S \ref{sec:nist}).}
\begin{tabular}{|l|c|c|c|c|c|}
\hline
Test dataset & 
\begin{tabular}[c]{@{}c@{}} Mean of \\ $\mathcal{W}(P_a,Q)$ \end{tabular}
& \begin{tabular}[c]{@{}c@{}} Median of \\
$\mathcal{W}(P_a,Q)$ \end{tabular} & \begin{tabular}[c]{@{}c@{}} Range of \\
$\mathcal{W}(P_a,Q)$ \end{tabular}  & \begin{tabular}[c]{@{}c@{}} Standard deviation \\
 of $\mathcal{W}(P_a,Q)$ \end{tabular} & \begin{tabular}[c]{@{}c@{}} Mean of accuracy \\
 improvement \end{tabular}  \\
\hline
EMNIST & 34.41 & 43.71 & 39.58 & 9.94 & 13.46 \\
MNIST & 39.13 & 49.04 & 44.17 & 11.35 & 20.94 \\
FMNIST & 44.19 & 54.75 & 39.11 & 10.64 & 10.62 \\
USPS & 42.04 & 48.32 & \textbf{23.49} & \textbf{6.13} & 5.28 \\
KMNIST & 47.65 & 53.92 & \textbf{24.83} & \textbf{6.19} & 10.88 \\
\hline
\end{tabular}
\label{tab:w_stat}
\end{table}

\subsection{Full results of VTAB experiments}
In Section \ref{sec:vtab}, we only showed the relative improvement of the test accuracy compared to non-pretraining. Here we will show the full test accuracy results. We keep the hyper-parameters consistent through all pre-training datasets. Table \ref{tab:vtab_full} clearly shows that the interpolation dataset with optimal weight assigned by our method can have a better performance than a na\"{\i}ve uniform weight. And with the same weight, our OTDD map will give a higher accuracy than Mixup because Mixup does not use the information from the reference dataset (see Figure \ref{fig:proj_2d}).

\paragraph{Poor sub-pooling performance} We show the sub-pooling baseline as a non-trivial method to combine datasets. However, it performs poorly, and we believe there are two main reasons for this. First, this baseline wastes relevant label data, by discarding the original labels of the pretraining dataset and replacing them with the inputted nearest-neighbor label from the target examples. Secondly, it only uses the neighbors of the pet dataset, leaving all other datapoints unused.

\begin{table}[H]
\caption{Test accuracy (mean $\pm$ std over 5 runs in percent) of 1000-shot learning on Oxford-IIIT Pet test dataset. 
Non-transfer learning skips the pre-training step.
}\label{tab:vtab_full}
\centering
{\renewcommand{\arraystretch}{1.1}%
\begin{tabular}{|cc|c|}
\hline
\multicolumn{1}{|c|}{\multirow{9}{*}{Transfer learning}} & OTDD map (optimal weight) 
  & \textbf{22.60 $\pm$ 1.01} \\ \cline{2-3} 
\multicolumn{1}{|c|}{}                    & OTDD map (uniform weight)           & 21.06 $\pm$ 0.45 \\ \cline{2-3} 
\multicolumn{1}{|c|}{}                    &   Mixup (optimal weight)      & 17.45  $\pm$ 2.2 \\ \cline{2-3} 
\multicolumn{1}{|c|}{}                    & Mixup (uniform weight)         & 15.4 $\pm$ 1.56 \\ \cline{2-3} 
\multicolumn{1}{|c|}{}                    & \textsc{Caltech101}  & 18.24 $\pm$ 3.42 \\ \cline{2-3} 
\multicolumn{1}{|c|}{}                    &  \textsc{DTD}   &11.46 $\pm$ 0.68 \\ \cline{2-3} 
\multicolumn{1}{|c|}{}                    & \textsc{Flowers102}    & 11.11 
 $\pm$ 1.92 \\  \cline{2-3} 
 \multicolumn{1}{|c|}{}      & \textsc{Pooling}    & 14.88 $\pm$
 0.57 \\  \cline{2-3} 
 \multicolumn{1}{|c|}{}       & \textsc{Sub-pooling}    & 14.88 $\pm$
 0.57 \\ \hline 
\multicolumn{2}{|c|}{Non-transfer learning}                           & 11.71 $\pm$ 1.65 \\ \hline
\end{tabular}
}
\end{table}

\end{document}